\definecolor{our-green}{rgb}{0.56, 0.692, 0.195}
\definecolor{our-darkgreen}{rgb}{0.297, 0.348, 0.105}
\definecolor{our-yellow}{rgb}{0.881, 0.611, 0.142}
\definecolor{ourbrown}{rgb}{0.772,0.432,0.102}
\definecolor{our-red}{rgb}{0.923,0.386,0.209}
\definecolor{our-blue}{rgb}{0.368,0.507,0.71}
\renewcommand{\arraystretch}{1.2}
\DeclareRobustCommand\onedot{\futurelet\@let@token\@onedot}
\def\@onedot{\ifx\@let@token.\else.\null\fi\xspace}
\newcommand\eg{{e.g}\onedot}  
\newcommand\ie{{i.e}\onedot}
\newcommand\wrt{w.r.t\onedot} 
\newcommand\Figref[1]{Fig.~\ref{#1}}
\newcommand\Tabref[1]{Tab.~\ref{#1}}
\newcommand\Secref[1]{Sec.~\ref{#1}}
\newcommand\Algref[1]{Module~\ref{#1}}
\newcommand{\method}{CombOptNet\xspace}
\newcommand{\knapsack}{\textsc{Knapsack}\xspace}
\newcommand{\wsc}{\textsc{WSC}\xspace}
\newcommand{\gt}{ground-truth\xspace}
\newcommand{\rc}{\textsc{RC}\xspace}
\newcommand{\bbgm}{BB-GM\xspace}
\newcommand{\suppl}{Supplementary material\xspace}
\newcommand{\bin}{\textit{binary}\xspace}
\newcommand{\den}{\textit{dense}\xspace}
\newtheorem{prop}{Proposition}
\theoremstyle{definition}
\newtheorem*{example}{Example}
\def\va{{\bm{a}}}
\def\vb{{\bm{b}}}
\def\vc{{\bm{c}}}
\def\ve{{\bm{e}}}
\def\vo{{\bm{o}}}
\def\vx{{\bm{x}}}
\def\vy{{\bm{y}}}
\def\mA{{\bm{A}}}
\DeclareMathAlphabet{\mathsfit}{\encodingdefault}{\sfdefault}{m}{sl}
\SetMathAlphabet{\mathsfit}{bold}{\encodingdefault}{\sfdefault}{bx}{n}
\def\sN{{\mathbb{N}}}
\def\sZ{{\mathbb{Z}}}
\newcommand{\R}{\mathbb{R}}
\DeclareMathOperator{\sign}{sign}
\newcommand\proj[1]{#1_\text{proj}}
\renewcommand{\d}{\mathrm{d}}
\DeclareMathOperator\dist{\mathrm{dist}}
\newcommand\Tstrut{\rule{0pt}{2.4ex}}
\algrenewcommand\algorithmicindent{1em}
\algrenewcommand{\algorithmiccomment}[1]{%
\bgroup\hskip2em\textcolor{our-darkgreen}{//~\textsl{#1}}\egroup}
\algrenewcommand{\Return}{\State\textbf{return}\ }
\algnewcommand{\Save}{\State\textbf{save}\ }
\algnewcommand{\Load}{\State\textbf{load}\ }
\newcommand*{\addFileDependency}[1]{
  \typeout{(#1)}
  \@addtofilelist{#1}
  \IfFileExists{#1}{}{\typeout{No file #1.}}
}
\newcolumntype{d}{D{+}{\,$\pm$\,}{3,3}}
\newcolumntype{e}[1]{D{+}{\,$\pm$\,}{#1}}
\newcommand{\headimg}[1]{\includegraphics[width=.8\linewidth]{#1}}
\icmltitlerunning{CombOptNet: Fit the Right NP-Hard Problem by Learning Integer Programming Constraints}
\begin{document}
\twocolumn[
\icmltitle{CombOptNet: Fit the Right NP-Hard Problem by \\Learning Integer Programming Constraints}




\begin{icmlauthorlist}
\icmlauthor{Anselm Paulus}{mpi}
\icmlauthor{Michal Rol\'inek}{mpi}
\icmlauthor{V\'\i t Musil}{muni}
\icmlauthor{Brandon Amos}{fb}
\icmlauthor{Georg Martius}{mpi}
\end{icmlauthorlist}

\icmlaffiliation{mpi}{Max-Planck-Institute for Intelligent Systems, T\"ubingen, Germany}
\icmlaffiliation{muni}{Masaryk University, Brno, Czechia}
\icmlaffiliation{fb}{Facebook AI Research, USA}

\icmlcorrespondingauthor{Anselm Paulus \& Georg Martius}{firstname.lastname@tuebingen.mpg.de}

\icmlkeywords{}

\vskip 0.3in

\begin{center}
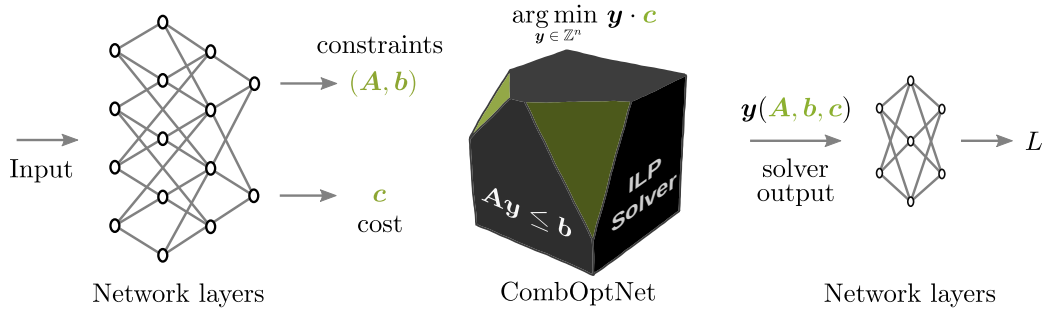

  \headimg{architecture_sketch3.pdf}
  \captionof{figure}{\method{} as a module in a deep architecture.}
  \label{fig:overview}
\end{center}
\vskip 0.2in

]



\printAffiliationsAndNotice{}  

\begin{abstract}
Bridging logical and algorithmic reasoning with modern machine learning
techniques is a fundamental challenge with potentially transformative impact.
On the algorithmic side, many \textsc{NP-hard} problems can be expressed as integer
programs, in which the constraints play the role of their ``combinatorial
specification.'' In this work, we aim to integrate integer programming solvers
into neural network architectures as layers capable of learning \emph{both}
the cost terms and the constraints. The resulting end-to-end trainable
architectures jointly extract features from raw data and
solve a suitable (learned) combinatorial problem with state-of-the-art
integer programming solvers.
We demonstrate the potential of such layers with an extensive performance analysis on synthetic data and with a demonstration on a competitive computer vision keypoint matching benchmark.
\end{abstract}

\section{Introduction}

It is becoming increasingly clear that to advance artificial
intelligence, we need to dramatically enhance the reasoning, algorithmic,
logical, and symbolic capabilities of data-driven models.
Only then we can aspire to match humans in their astonishing ability to perform complicated
abstract tasks such as playing chess only based on visual input.
While there are decades worth of research directed at solving complicated abstract tasks
\emph{from their abstract formulation}, it seems very difficult to align these
methods with deep learning architectures needed for processing raw inputs. Deep
learning methods often struggle to implicitly acquire the abstract reasoning
capabilities to solve and generalize to new tasks.  Recent work has
investigated more structured paradigms that have more explicit reasoning
components, such as layers capable of convex optimization.  In this paper, we
focus on combinatorial optimization, which has been well-studied and captures
nontrivial reasoning capabilities over discrete objects. Enabling its
unrestrained usage in machine learning models should fundamentally enrich the
set of available components.

On the technical level, the main challenge of incorporating combinatorial
optimization into the model typically amounts to \emph{non-differentiability}
of methods that operate with discrete inputs or outputs. Three basic approaches
to overcome this are to a) develop ``soft'' continuous versions of the discrete
algorithms \citep{wang2019satnet, zanfir2018deep};
b) adjust the topology of neural network architectures to express certain algorithmic behaviour
\citep{graves2014neural, graves2016hybrid, battaglia2018}; c) provide an
informative gradient approximation for the discrete algorithm
\citep{vlastelica2019differentiation, berthet2020learning}.  While the last
strategy requires nontrivial theoretical considerations, it can resolve the
non-differentiability in the strongest possible sense; without any compromise
on the performance of the original discrete algorithm.
We follow this approach.

The most successful generic approach to combinatorial optimization is integer linear programming (ILP). Integrating ILPs as building blocks of differentiable models is
challenging because of the nontrivial dependency of the solution on the
cost terms and on the constraints.
Learning parametrized cost terms has been addressed in
\citet{vlastelica2019differentiation, berthet2020learning, ferber2020mipaal},
the learnability of constraints is, however, unexplored. At the same time, the
constraints of an ILP are of critical interest due to their \textbf{remarkable
expressive power}. Only by modifying the constraints, one can formulate a number
of diverse combinatorial problems (\textsc{shortest-path}, \textsc{matching},
\textsc{max-cut}, \knapsack, \textsc{travelling salesman}). In that
sense, learning ILP constraints corresponds to \textbf{learning the
combinatorial nature} of the problem at hand.

In this paper, we propose a backward pass (gradient computation) for ILPs covering their \textbf{full specification},
allowing to use blackbox ILPs as combinatorial layers at any point
in the architecture.
This layer can jointly learn the cost terms and the
constraints of the integer program, and as such it aspires to achieve
\textbf{universal combinatorial expressivity}.
We demonstrate the potential of this method on multiple
tasks. First, we extensively analyze the performance on synthetic data.
This includes the inverse optimization task of
recovering an unknown set of constraints, 
and a \knapsack problem specified in plain text descriptions.
Finally, we demonstrate the applicability to real-world tasks
on a competitive computer vision keypoint matching benchmark.

\subsection{Related Work}\label{sec:related_work}
\paragraph{Learning for combinatorial optimization.}
Learning methods can powerfully augment classical combinatorial optimization
methods with data-driven knowledge.
This includes work that learns how to solve
combinatorial optimization problems to improve upon traditional
solvers that are otherwise computationally expensive or intractable, \eg
by using reinforcement learning
\citep{zhang2000solving,bello2016neural,khalil2017learning,nazari2018reinforcement},
learning graph-based algorithms
\citep{velivckovic2017graph,velivckovic2019neural,wilder2019end},
learning to branch \citep{balcan2018learning},
solving SMT formulas \citep{balunovic2018learning}
and TSP instances \citep{kool2018attention}.
\citet{nair2020solving} have recently scaled up learned MIP
solvers on non-trivial production datasets.
In a more general computational paradigm,
\citet{graves2014neural, graves2016hybrid} parameterize
and learn Turing machines.

\paragraph{Optimization-based modeling for learning.}
In the other direction, optimization serves as a useful modeling
paradigm to improve the applicability of machine learning models and to
add domain-specific structures and priors.
In the continuous setting, differentiating through optimization
problems is a foundational topic as it enables optimization algorithms
to be used as a layer in end-to-end trainable models
\citep{domke2012generic,gould2016differentiating}.
This approach has been recently studied in the convex setting in
OptNet \citep{amos2017optnet} for quadratic programs,
and more general cone programs in
\citet[Section~7.3]{amos2019differentiable}
and
\citet{agrawal2019differentiable,agrawal2019differentiating}.
One use of this paradigm is to incorporate the knowledge of a downstream
optimization-based task into a predictive model
\citep{elmachtoub2020smart,donti2017task}.
Extending beyond the convex setting, optimization-based modeling
and differentiable optimization are used for
sparse structured inference \citep{niculae2018sparsemap},
\textsc{MaxSat} \citep{wang2019satnet},
submodular optimization \citep{djolonga2017differentiable}
mixed integer programming \citep{ferber2020mipaal},
and discrete and combinational settings
\citep{vlastelica2019differentiation,berthet2020learning}.
Applications of optimization-based modeling include
computer vision \citep{rolinek2020deep, rolinek2020cvpr},
reinforcement learning \citep{dalal2018safe,amos2019cem,vlastelica2020neuroalgorithmic},
game theory \citep{ling2018game}, and
inverse optimization
\citep{tan2020learning},
and meta-learning \citep{bertinetto2018meta,lee2019meta}.

\section{Problem description}\label{sec:background}

Our goal is to incorporate an ILP as a differentiable layer in neural networks
that inputs both \textbf{constraints and objective} coefficients and outputs
the corresponding ILP solution.

Furthermore, we aim to embed ILPs in a \textbf{blackbox manner}: 
On the forward pass, we run the unmodified optimized solver, making no compromise on its performance.
The task is to propose an \textbf{informative gradient} for the solver as it is.
We never modify, relax, or soften the solver.

We assume the following form of a bounded integer program:
\begin{equation} \label{E:BILP}
	\min_{\vy\in Y}\, \vc \cdot \vy
		\qquad \text{subject to} \qquad
	\mA\vy \le \vb,
\end{equation}
where $Y$ is a bounded subset of $\sZ^n$, $n\in\sN$, $\vc\in\R^n$ is the cost vector,
$\vy$ are the variables, $\mA=[\va_1,\ldots,\va_m]\in\R^{m\times n}$ is the matrix of
constraint coefficients and $\vb\in\R^m$ is the bias term.
The point at which the minimum is attained is denoted by $\vy(\mA,\vb,\vc)$.

The task at hand is to provide gradients for the mapping
$(\mA,\vb,\vc)\to\vy(\mA,\vb,\vc)$, in which the triple $(\mA,\vb,\vc)$ is the
specification of the ILP solver containing both the cost and the constraints,
and $\vy(\mA,\vb,\vc) \in Y$ is the optimal solution of the instance.

\begin{example} \label{ex:knapsack}
The ILP formulation of the \knapsack problem can be written as
\begin{equation} \label{E:BILP-knapsack}
	\max_{\vy\in \{0, 1\}^n}  \vc\cdot \vy
		\qquad \text{subject to} \qquad
	\va\cdot \vy \le b,
\end{equation}
where $\vc=[c_1,\ldots,c_n]\in\R^n$ are the prices of the items,
$\va=[a_1,\ldots,a_n]\in\R^n$ their weights and $b\in\R$ the knapsack capacity.
\end{example}

Similar encodings can be found for many more - often \textsc{NP-hard} -
combinatorial optimization problems including those mentioned in the
introduction. Despite the apparent difficulty of solving ILPs, modern highly
optimized solvers \cite{gurobi, cplex2009v12} can routinely find optimal
solutions to instances with thousands of variables.

\subsection{The main difficulty.}

\paragraph{Differentiability.}

Since there are finitely many available values of $\vy$, the mapping $(\mA,\vb,\vc)\to\vy(\mA,\vb,\vc)$ is piecewise constant; and as such, its \textbf{true gradient is zero} almost everywhere.
Indeed, a small perturbation of the constraints or of the cost does \emph{typically} not cause a change in the optimal ILP solution.
The zero gradient has to be suitably supplemented.

Gradient surrogates w.r.t.~objective coefficients $\vc$ have been studied intensively 
\citep[see \eg][]{elmachtoub2020smart, vlastelica2019differentiation, ferber2020mipaal}.
Here, we focus on the differentiation w.r.t.~constraints coefficients $(\mA,\vb)$
that has been unexplored by prior works.

\paragraph{LP vs.\ ILP: Active constraints.}

In the LP case, the integrality constraint on $Y$ is removed.
As a result, in the typical case, the optimal solution can be written as the unique solution to a linear system determined by the set of active constraints.
This captures the relationship between the constraint matrix and the optimal solution. Of course, this relationship is differentiable.

However, in the case of an ILP the \textbf{concept of active constraints vanishes}.
There can be optimal solutions for which no constraint is tight.
Providing gradients for nonactive-but-relevant constraints is the principal difficulty.
The complexity of the interaction between the constraint set and the optimal solution is reflecting the \textsc{NP-Hard} nature of ILPs and is the reason why relying on the LP case is of little help.

\section{Method}


First, we reformulate the gradient problem as a descend direction task. We have to resolve
an issue that the suggested gradient update $\vy-\d\vy$ to the optimal solution $\vy$ is typically unattainable, \ie $\vy-\d\vy$ is not a feasible integer point.
Next, we generalize the concept of active constraints.
We substitute the binary information ``active/nonactive'' by a continuous proxy based on Euclidean distance.




\paragraph{Descent direction.}

On the backward pass, the gradient of the layers following the ILP solver is given.
Our aim is to propose a direction of change to the constraints and to the cost such that the solution of the updated ILP moves towards the negated incoming gradient's direction (\ie the descent direction).

Denoting a loss by $L$, let $\mA$, $\vb$, $\vc$ and the incoming gradient $\d\vy=\partial L/\partial\vy$ at the point $\vy=\vy(\mA,\vb,\vc)$ be given.
We are asked to return a gradient corresponding to $\partial L/\partial\mA$, $\partial L/\partial\vb$ and $\partial L/\partial\vc$.
Our goal is to find directions $\d\mA$, $\d\vb$ and $\d\vc$ for which the distance between the updated solution $\vy(\mA-\d\mA,\vb-\d\vb,\vc-\d\vc)$ and the target $\vy-\d\vy$ decreases the most.

If the mapping $\vy$ is differentiable, it leads to the correct gradients $\partial L/\partial\mA=\partial L/\partial \vy\cdot \partial \vy/\partial\mA$ (analogously for $\vb$ and $\vc$).
See Proposition~\ref{P:y-diff} in the \suppl, for the precise formulation and for the proof.
The main advantage of this formulation is that it is \textbf{meaningful even in the discrete case}.

However, every ILP solution $\vy(\mA-\d\mA,\vb-\d\vb,\vc-\d\vc)$ is restricted to integer points and its ability to approach the point $\vy-\d\vy$ is limited unless
$\d\vy$ is also an integer point.
To achieve this, let us decompose
%
%
\begin{equation} \label{E:decomposition}
	\d\vy = \sum_{k=1}^n \lambda_k \Delta_k,
\end{equation}
where $\Delta_k\in\{-1,0,1\}^n$ are some integer points and $\lambda_k\ge 0$ are scalars.
The choice of basis $\Delta_k$ is discussed in a separate paragraph, for now it suffices to know that every point $\vy'_k=\vy-\Delta_k$ is an integer point neighbour of $\vy$ pointing in a ``direction of $-\d\vy$''.
We then address separate problems with $\d\vy$ replaced by the integer updates $\Delta_k$.

In other words, our goal here is to find an update on $\mA$, $\vb$, $\vc$ that eventually pushes the solution closer to $\vy-\Delta_k$.
Staying true to linearity of the standard gradient mapping, we then aim to compose the final gradient as a linear combination of the gradients coming from the subproblems.

\paragraph{Constraints update.}

To get a meaningful update for a realizable change $\Delta_k$, we take a gradient of a piecewise affine local \textit{mismatch} function $P_{\vy'_k}$. 
The definition of $P_{\vy'_k}$ is based on a geometric understanding of the underlying structure.
To that end, we rely on the Euclidean distance between a point and a hyperplane. Indeed, for any point $\vy$ and a given hyperplane, parametrized by vector $\va$ and scalar $b$ as $\vx\mapsto\va\cdot\vx-b$, we have:
\begin{equation}
	\dist(\va,b;\vy)=|\va\cdot\vy-b|/\|\va\|.
\end{equation}
Now, we distinguish the cases based on whether $\vy'_k$ is feasible, \ie $\mA\vy'_k\le\vb$, or not.
The infeasibility of $\vy'_k$ can be caused by one or more constraints.
We then define
\begin{equation} \label{E:proxy-constraints}
	P_{\vy'_k}(\mA,\vb) =
		\begin{cases}
			\min_{j} \dist(\va_j,b_j;\vy)
		        \\ \hspace{2em}
		        \text{\color{our-darkgreen}if $\vy'_k$ is feasible and $\vy'_k\ne\vy$}
				\\
			\sum_{j} \llbracket\va_j\cdot\vy'_k > b_j\rrbracket \dist(\va_j,b_j;\vy'_k)
			    \\ \hspace{2em}
			    \text{\color{our-darkgreen}if $\vy'_k$ is infeasible}
				\\
			\makebox[2em][l]{0}
			 \text{\color{our-darkgreen} if $\vy'_k=\vy$ or $\vy'_k\notin Y$},
		\end{cases}
		\hspace{-1em}
\end{equation}
where $\llbracket\cdot \rrbracket$ is the Iverson bracket.
The geometric intuition behind the suggested mismatch function is described in \Figref{fig:proxy} and its caption.
Note that tighter constraints contribute more to $P_{\vy'_k}$.
In this sense, the mismatch function \textbf{generalizes the concept of active constraints}.
In practice, the minimum is softened to allow multiple constraints to be updated simultaneously. For details, see the \suppl.

\begin{figure}[tb]
	\centering
	\begin{subfigure}[t]{.48\linewidth}
		\centering
		\includegraphics[width=\linewidth]{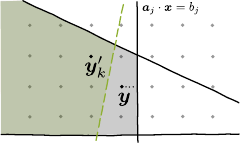}
		\caption{$\vy'_k$ is feasible but $\vy'_k\ne\vy$.}
		\label{fig:proxy-feasible}
	\end{subfigure}
\hskip 0pt plus 1fill
	\begin{subfigure}[t]{.48\linewidth}
		\centering
		\includegraphics[width=\linewidth]{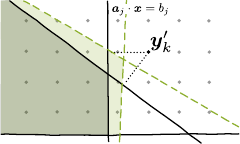}
		\caption{$\vy'_k$ is infeasible.}
		\label{fig:proxy-infeasible}
	\end{subfigure}
	\caption{Geometric interpretation of the suggested constraint update.
		(a) 
			All constraints are satisfied for $\vy'_k$.
			The proxy minimizes the distance to the nearest (``most active'') constraint to make $\vy$ ``less feasible''.
			A possible updated feasible region is shown in green.
		(b) 
			The suggested $\vy'_k$ satisfies one of three constraints.
			The proxy minimizes the distance to violated constraints to make $\vy'_k$ ``more feasible''.
	}
  \label{fig:proxy}
\end{figure}

Imposing linearity and using decomposition~\eqref{E:decomposition}, we define the outcoming gradient~$\d\mA$ as
\begin{equation} \label{E:A-grad}
	\d\mA = \sum_{k=1}^n \lambda_k \frac{\partial P_{\vy'_k}}{\partial\mA}(\mA,\vb)
\end{equation}
and analogously for $\d\vb$, by differentiating with respect~to~$\vb$.
The computation is summarized in \Algref{alg:forward-backward}.

\begin{algorithm}[tb]
\makeatletter
\renewcommand{\ALG@name}{Module}
\makeatother
	\begin{algorithmic}
		\Function{ForwardPass}{$\mA,\vb,\vc$}
		\State $\vy :=$ \textbf{Solver}($\mA,\vb,\vc$)
		\Save $\vy$ and $\mA,\vb,\vc$ for backward pass
		\Return $\vy$
		\EndFunction

		\Function{BackwardPass}{$\d\vy$}
		\Load $\vy$ and $\mA,\vb,\vc$ from forward pass
		\State Decompose $\d\vy = \sum_k \lambda_k \Delta_k$
			\\ \Comment{set $\Delta_k$ as in~\eqref{E:basis} and $\lambda_k$ as in Proposition~\ref{P:basis}}
		\State Calculate the gradients
			\\ \qquad
			$\d\mA^k := \frac{\partial P_{\vy'_k}}{\partial\mA}$,
			$\d\vb^k := \frac{\partial P_{\vy'_k}}{\partial\vb}$,
			$\d\vc^k := \frac{\partial P_{\vy'_k}}{\partial\vc}$
			\\ \Comment{$P_{\vy'_k}$ defined in~\eqref{E:proxy-constraints} and~\eqref{E:proxy-cost}}
		\State Compose
			$\d\mA,\d\vb,\d\vc := \sum_k \lambda_k \left(
					\d\mA^k, \d\vb^k, \d\vc^k
				 \right)$
			\\ \Comment{According to~\eqref{E:A-grad}}
		\Return $\d\mA, \d\vb, \d\vc$
		\EndFunction
	\end{algorithmic}
	\caption{\Tstrut \method}
	\label{alg:forward-backward}
\end{algorithm}

Note that our mapping $\d\vy\mapsto\d\mA,\d\vb$ is homogeneous. It is due to the fact that the whole situation is rescaled to one case (choice of basis) where the gradient is computed and then rescaled back (scalars $\lambda_k$).
The most natural scale agrees with the situation when the ``targets'' $\vy'_k$ are the closest integer neighbors.
This ensures that the situation does not collapse to a trivial solution (zero gradient) and, simultaneously, that we do not interfere with very distant values~of~$\vy$.

This basis selection plays a role of a ``homogenizing hyperparamter'' ($\lambda$ in \cite{vlastelica2019differentiation} or $\varepsilon$ in \cite{berthet2020learning}).
In our case, we explicitly construct a correct basis and do not need to optimize any additional hyperparameter.

\paragraph{Cost update.}

Putting aside distinguishing of feasible and infeasible~$\vy'_k$, the cost update problem has been addressed in multiple previous works.
We use a simple approach of setting the mismatch function such that the resulting update favours $\vy'_k$ over $\vy$ in the updated optimization problem, \ie
\begin{equation} \label{E:proxy-cost}
	P_{\vy'_k}(\vc)
		= \begin{cases}
				\vc\cdot(\vy'_k - \vy)
					\quad\text{\color{our-darkgreen}if $\vy'_k$ is feasible}
					\\
				0
					\hskip2.9em\text{\color{our-darkgreen}if $\vy'_k$ is infeasible or $\vy'_k\notin Y$.}
			\end{cases}
\end{equation}
The gradient $\d\vc$ is then composed analogously as in \eqref{E:A-grad}.

\paragraph{The choice of the basis.}

Denote by $k_1,\ldots,k_n$ the indices of the coordinates in the absolute values of $\d\vy$ in decreasing order, \ie
\begin{equation}
	|\d\vy_{k_1}|
		\ge |\d\vy_{k_2}|
		\ge \cdots
		\ge |\d\vy_{k_n}|
\end{equation}
and set
\begin{equation} \label{E:basis}
	\Delta_k = \sum_{j=1}^k \sign(\d\vy_{k_j}) \ve_{k_j},
\end{equation}
where $\ve_{k}$ is the $k$-th canonical vector.
Therefore, $\Delta_k$ is the (signed) indicator vector of the \emph{first $k$ dominant directions}.

Denote by $\ell$ the largest index for which $|\d\vy_\ell|>0$.
Then the first $\ell$ vectors $\Delta_k$'s are linearly independent and they form a basis of the corresponding subspace.
Therefore, there exist scalars $\lambda_k$'s satisfying decomposition~\eqref{E:decomposition}.

\begin{prop} \label{P:basis}
If
$\lambda_j=|\d\vy_{k_j}|-|\d\vy_{k_{j+1}}|$
for $j=1,\ldots,n-1$ and $\lambda_n=|\d\vy_{k_n}|$,
then representation~\eqref{E:decomposition} holds with
$\Delta_k$'s as in~\eqref{E:basis}.
\end{prop}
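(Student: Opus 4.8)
The plan is to verify the vector identity $\d\vy=\sum_{k=1}^n\lambda_k\Delta_k$ one coordinate at a time. Since $k_1,\ldots,k_n$ is a permutation of $\{1,\ldots,n\}$, every coordinate of $\d\vy$ is of the form $\d\vy_{k_j}$ for a unique $j$, so it suffices to match the $k_j$-th entry of both sides for each $j=1,\ldots,n$.

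First I would read off from the definition~\eqref{E:basis} which $\Delta_k$ carry a contribution in coordinate $k_j$. By construction $\Delta_k$ contains the term $\sign(\d\vy_{k_j})\ve_{k_j}$ precisely when $j\le k$; for $k<j$ the index $k_j$ does not appear in the sum defining $\Delta_k$, so its $k_j$-th entry vanishes. Hence the $k_j$-th coordinate of $\sum_{k=1}^n\lambda_k\Delta_k$ equals $\sign(\d\vy_{k_j})\sum_{k=j}^n\lambda_k$.

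The key step is then to evaluate the tail sum. Substituting the stated values and telescoping,
\begin{equation*}
	\sum_{k=j}^n \lambda_k
		= \sum_{k=j}^{n-1}\bigl(|\d\vy_{k_k}|-|\d\vy_{k_{k+1}}|\bigr) + |\d\vy_{k_n}|
		= |\d\vy_{k_j}|,
\end{equation*}
since all intermediate terms cancel. Multiplying by $\sign(\d\vy_{k_j})$ recovers exactly $\d\vy_{k_j}$, which matches the left-hand side in coordinate $k_j$; as $j$ was arbitrary, this establishes~\eqref{E:decomposition}.

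There is no genuine obstacle here: the argument reduces to a one-line telescoping computation once the bookkeeping is in place, and I expect the only point demanding care to be correctly tracking the membership condition $j\le k$ that selects which $\Delta_k$ contribute to a given coordinate. I would also remark, to meet the requirements of~\eqref{E:decomposition}, that each $\lambda_k$ is nonnegative: the ordering $|\d\vy_{k_1}|\ge\cdots\ge|\d\vy_{k_n}|$ forces every difference $\lambda_j=|\d\vy_{k_j}|-|\d\vy_{k_{j+1}}|$ to be nonnegative, and $\lambda_n=|\d\vy_{k_n}|\ge 0$, so the $\Delta_k$'s together with these scalars form an admissible decomposition.
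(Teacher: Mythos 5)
Your proof is correct, and it takes a genuinely different route from the paper's. The paper proves a strengthened vector identity
\begin{equation*}
	\sum_{j=\ell}^n \d\vy_{k_j} \ve_{k_j}
		= \sum_{j=\ell}^n \lambda_j \Delta_j
			- |\d\vy_{k_\ell}| \sum_{j=1}^{\ell-1} \sign(\d\vy_{k_j}) \ve_{k_j}
\end{equation*}
by backward induction on $\ell$ from $n$ down to $1$, carrying along an explicit correction term that records how much of each earlier coordinate has been over-counted by the partial sums $\sum_{j\ge\ell}\lambda_j\Delta_j$; the proposition is the case $\ell=1$. You instead fix a single coordinate $k_j$, observe that exactly the $\Delta_k$ with $k\ge j$ contribute $\sign(\d\vy_{k_j})$ there, and collapse $\sum_{k=j}^n\lambda_k$ by telescoping to $|\d\vy_{k_j}|$. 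The underlying cancellation is the same, but your organization is more elementary and shorter: it avoids inventing and verifying the auxiliary correction term, at the cost of requiring the (easy) bookkeeping of which basis vectors touch a given coordinate. You also explicitly check $\lambda_k\ge 0$, which the decomposition~\eqref{E:decomposition} requires and which the paper's proof leaves implicit; that is a worthwhile addition. One pedantic point you could state but need not worry about: when $\d\vy_{k_j}=0$ the identity $\sign(\d\vy_{k_j})\,|\d\vy_{k_j}|=\d\vy_{k_j}$ still holds regardless of the convention for $\sign(0)$, so ties and trailing zeros cause no trouble.
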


An example of a decomposition is shown in \Figref{fig:decomposition}.
Further discussion about the choice of basis and various comparisons can be found in the \suppl.
\begin{figure}[H]
\centering
\includegraphics[width=0.6\linewidth]{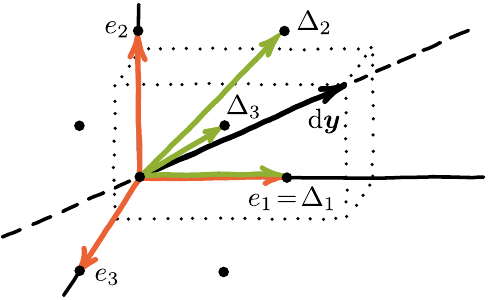}
\caption{All basis vectors $\Delta_k$ (green) point more ``towards the $\d\vy$ direction'' compared to the canonical ones (orange).}
\label{fig:decomposition}
\end{figure}





\paragraph{Constraint parametrization.}

For learning constraints, we have to specify their parametrization.
The representation is of great importance, as it determines how the constraints respond to incoming gradients.
Additionally, it affects the meaning of \textit{constraint distance} by changing the parameter space.

We represent each constraint $(\va_k,b_k)$ as a hyperplane described by its \emph{normal vector} $\va_k$, \emph{distance from the origin} $r_k$ and \emph{offset} $o_k$ of the origin in the global coordinate system as displayed in~\Figref{fig:parametrization-general}.
Consequently $b_k=r_k-\va_k\cdot o_k$.

Compared to the plain parametrization which represents the constraints as a matrix $\mA$ and a vector $\vb$, our slightly overparametrized choice allows the constraints to rotate without requiring to traverse large distance in parameter space (consider \eg a $180^\circ$ rotation).
An illustration is displayed in~\Figref{fig:parametrization-update}.
Comparison of our choice of parametrization to other encodings and its effect on the performance can be found in the~\suppl.

\begin{figure}[tb]
	\centering
	\begin{subfigure}[t]{.48\linewidth}
		\centering
		\includegraphics[width=\linewidth]{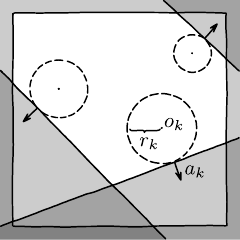}
		\caption{Constraint representation}
		\label{fig:parametrization-general}
	\end{subfigure}
\hskip 0pt plus 1fill
	\begin{subfigure}[t]{.48\linewidth}
		\centering
		\includegraphics[width=\linewidth]{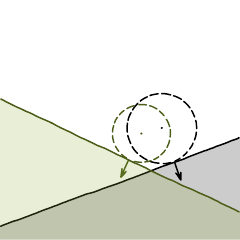}
		\caption{Possible constraint update}
		\label{fig:parametrization-update}
	\end{subfigure}
	\caption{(a)~Each constraint $(\va_k,b_k)$ is parametrized by its normal vector $\va_k$ and a distance $r_k$ to its own origin~$o_k$. (b)~Such a representation allows for easy rotations around the learnable offset~$o_k$ instead of rotating around the static global~origin.}
  \label{fig:parametrization}
\end{figure}

\section{Demonstration \& Analysis}

We demonstrate the potential and flexibility of our method on four tasks.

Starting with an extensive performance analysis on synthetic data, we first
demonstrate the ability to learn multiple constraints simultaneously. For this, we learn a
\emph{static set} of randomly initialized constraints from solved instances,
while using access to the \emph{ground-truth} cost vector $\vc$.

Additionally, we show that the performance of our method on the synthetic datasets also translates to real classes of ILPs. For this we consider a similarly structured task as before, but use the NP-complete \wsc problem to generate the dataset.

Next, we showcase the ability to simultaneously learn the full ILP specification.
For this, we learn a single \emph{input-dependent} constraint
and the cost vector \emph{jointly} from the ground truth solutions of \textsc{knapsack} instances. These instances are encoded as sentence embeddings of their description in natural language.

Finally, we demonstrate that our method is also applicable to real-world problems.
On the task of keypoint matching, we show that our method achieves results that are comparable
to state-of-the-art architectures employing dedicated solvers.
In this example, we \emph{jointly} learn a \emph{static set} of constraints and the cost
vector from \gt matchings.

In all demonstrations, we use \textsc{Gurobi}~\citep{gurobi} to solve the ILPs
during training and evaluation.
Implementation details, a runtime analysis and additional results, such as ablations, other loss functions and more metrics, are provided in the \suppl.
Additionally, a qualitative analysis of the results for the Knapsack demonstration is included.


\subsection{Random Constraints} \label{sec:synthetic}

\paragraph{Problem formulation.}
The task is to learn the constraints $(\mA,\vb)$ corresponding to a fixed ILP. The network has only access to the cost vectors $\vc$ and the \gt ILP solutions $\vy^*$.
Note that the set of constraints perfectly explaining the data does not need to be unique.

\paragraph{Dataset.}
We generate 10 datasets for each cardinality $m=1,2,4,8$ of the \gt constraint set while keeping the dimensionality of the ILP fixed to $n=16$.
Each dataset fixes a set of (randomly chosen) constraints $(\mA,\vb)$ specifying the \gt feasible region of an ILP solver.
For the constraints $(\mA,\vb)$ we then randomly sample cost vectors~$\vc$ and compute the corresponding ILP solution~$\vy^*$~(\Figref{fig:dataset-synthetic}).
\begin{figure}[h]
	\includegraphics[width=\linewidth]{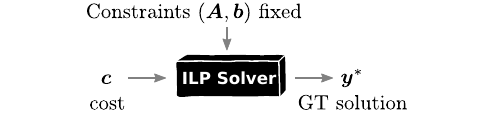}
	\caption{Dataset generation for the \rc demonstration.
	}
	\label{fig:dataset-synthetic}
\end{figure}

The dataset consists of 1\,600 pairs~$(\vc,\vy^*)$ for training
and 1\,000 for testing.
The solution space $Y$ is either constrained to $[-5, 5]^n$ (\emph{dense}) or $[0, 1]^n$ (\emph{binary}).
During dataset generation, we performed a suitable rescaling to ensure a sufficiently large set of feasible solutions.


\paragraph{Architecture.}

The network learns the constraints $(\mA,\vb)$ that specify the ILP solver from \gt pairs $(\vc, \vy^*)$.
Given $\vc$, predicted solution $\vy$ is compared to $\vy^*$ via the MSE~loss and the gradient is backpropagated to the learnable constraints using~\method (\Figref{fig:arch-synthetic}).
\begin{figure}[h]
	\includegraphics[width=\linewidth]{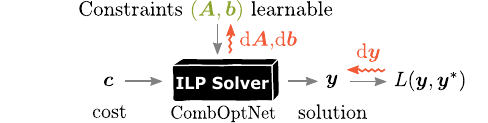}
	\caption{Architecture design for the \rc demonstration.
	}
	\label{fig:arch-synthetic}
\end{figure}

The number of learned constraints matches the number of constraints used for the dataset generation.
Note that if the ILP has no feasible solution, the \method layer output is undefined and any loss or evaluation metric depending on the solution $\vy$ is meaningless.
In practise, updates~\eqref{E:proxy-constraints} push the constraints outwards from the true solution $\vy^*$ leading to a quick emergence of a feasible region.

\paragraph{Baselines.}
We compare \method to three baselines.
Agnostic to any constraints, a simple MLP baseline directly predicts the solution from the input cost vector as the integer-rounded output of a neural network.
The CVXPY baseline uses an architecture similar to ours, only the \Algref{alg:forward-backward} of \method is replaced with the CVXPY implementation \cite{diamond2016CVXPY} of an LP solver
that provides a backward pass proposed by \citet{agrawal2019differentiable}. 
Similar to our method, it receives constraints and a cost vector and outputs the solution of the LP solver greedily rounded to a feasible integer solution.
Finally, we report the performance of always producing the solution of the problem only constrained to the outer region $\vy\in Y$.
This baseline does not involve any training and is purely determined by the dataset.

\paragraph{Results.}
The results are reported in \Figref{fig:random-results}.
\begin{figure}[tb]
  \centering
  \begin{subfigure}[b]{.42\textwidth}
		\centering
		\includegraphics[width=\linewidth]{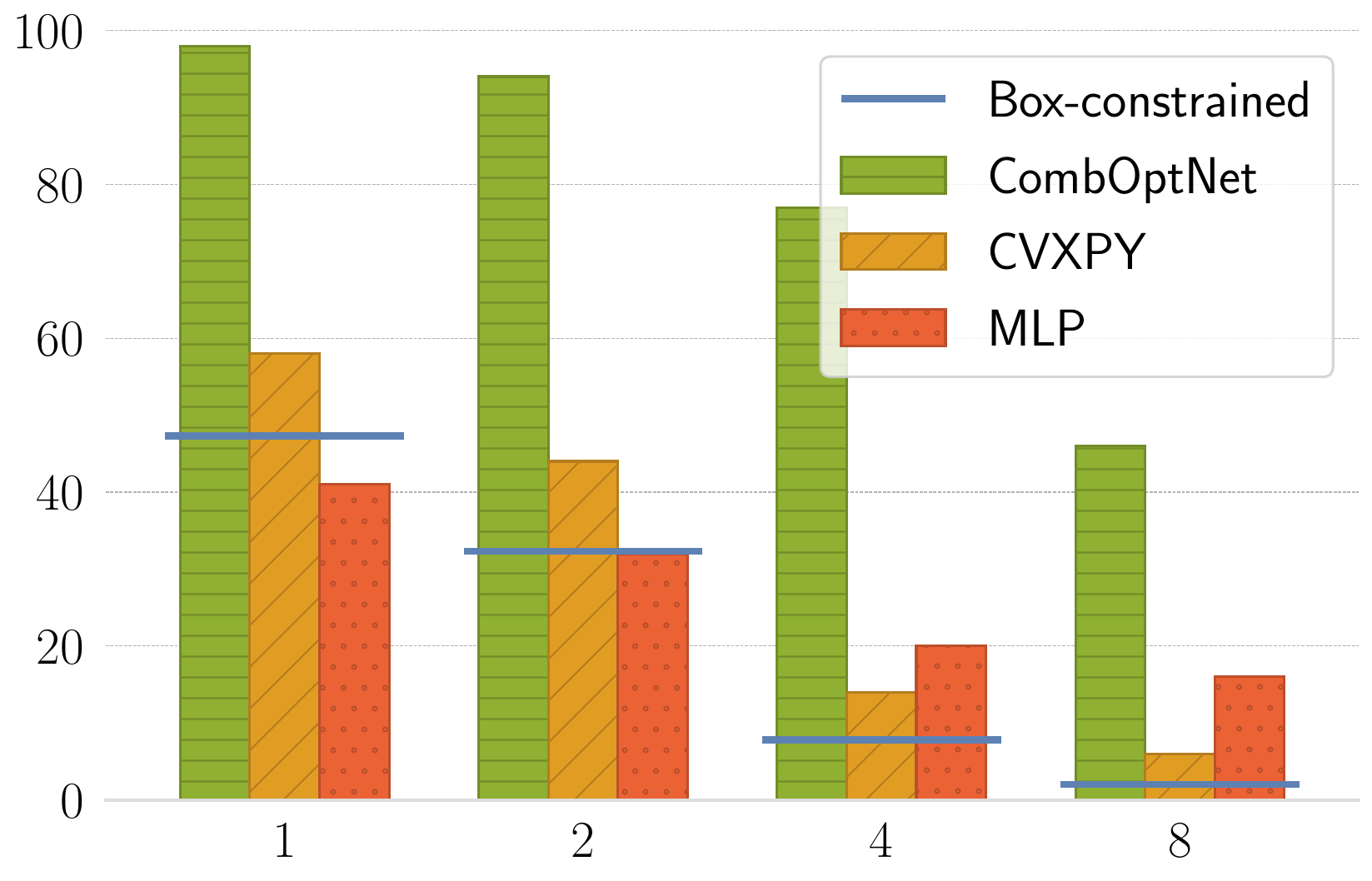}
		\caption{Results on the \textit{binary} datasets.}
	\end{subfigure}
	\hfill
  \begin{subfigure}[b]{.42\textwidth}
		\centering
		\includegraphics[width=\linewidth]{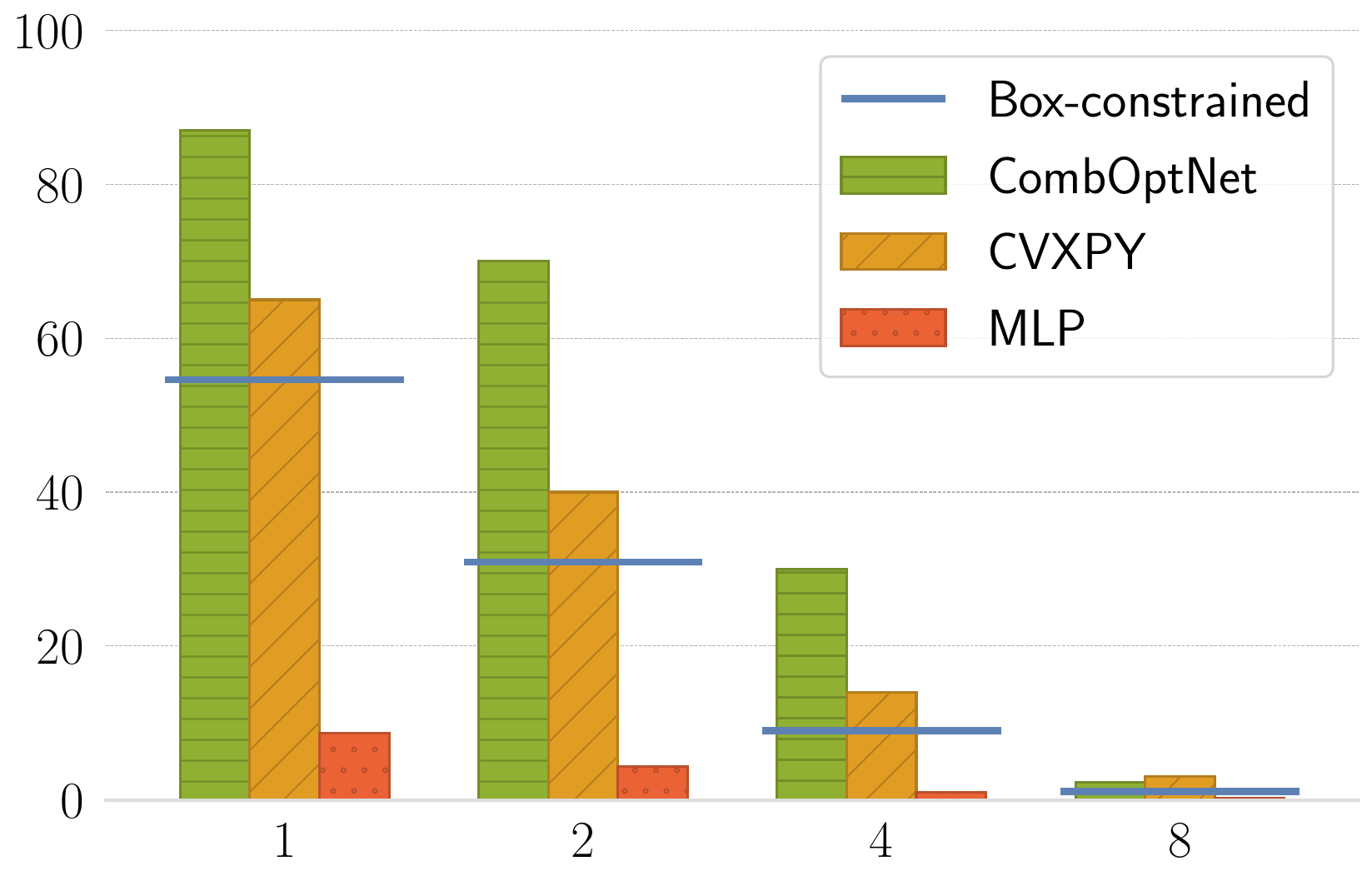}
		\caption{Results on the \textit{dense} datasets.}
	\end{subfigure}
	\caption{Results for the Random Constraints demonstration.
	We report mean accuracy ($\vy=\vy^*$ in \%) over 10 datasets for $1,2,4$ and $8$ ground truth constraints in $16$ dimensions. By Box-constrained we denote the performance of always producing the solution of the problem only constrained to the outer region $\vy\in Y$, which does not involve any training and is purely determined by the dataset.
	}
\label{fig:random-results}
\end{figure}
In the \textit{binary}, case we demonstrate a high accuracy of perfectly predicting the correct solution.
The CVXPY baseline is not capable of matching this, as it is not able to find a set of constraints for the LP problem that mimics the effect of running an ILP solver.
For most cost vectors, CVXPY often predicts the same solution as the unconstrained one and its ability to use constraints to improve is marginal.
The reason is that the LP relaxation of the ground truth problem is far from tight and thus the LP solver proposes many fractional solutions, which are likely to be rounded incorrectly.
This highlights the increased expressivity of the ILP formulation compared to the LP formulation.


Even though all methods decrease in performance in the \textit{dense} case as the number of possible solutions is increased, the trend from the \textit{binary} case continues.
With the increased density of the solution space, the LP relaxation becomes more similar to the ground truth ILP and hence the gap between \method and the CVXPY baseline decreases.

We conclude that \method is especially useful, when the underlying problem is truly difficult (\ie hard to approximate by an LP).
This is not surprising, as \method introduces structural priors into the network that are designed for hard combinatorial problems.

\subsection{Weighted Set Covering} \label{sec:weightedsetcovering}

We show that our performance on the synthetic datasets also translates to traditional classes of ILPs.
Considering a similarly structured architecture as in the previous section, we generate the dataset by solving instances of the NP-complete \wsc problem.

\paragraph{Problem formulation.}

A family $\mathcal C$ of subsets of a universe $U$ is called a covering of $U$ if $\bigcup\mathcal C=U$.
Given $U=\{1,\ldots,m\}$, its covering $\mathcal C=\{S_1,\ldots,S_n\}$ and cost $\vc\colon\mathcal C\to\mathbb R$,
the task is to find the sub-covering $\mathcal C'\subset\mathcal C$ with the lowest total cost $\sum_{S\in\mathcal C'} \vc(S)$.

The ILP formulation of this problem consists of $m$ constraints in $n$ dimensions.
Namely, if $\vy\in\{0,1\}^n$ denotes an indicator vector of the sets in $\mathcal C$,
$a_{kj}=\llbracket k\in S_j\rrbracket$ and $b_k=1$ for $k=1,\ldots m$,
then the specification reads as
\begin{equation}
	\min_{\vy\in Y}\, \sum_j \vc(S_j) \vy_j
		\qquad \text{subject to} \qquad
	\mA\vy \ge \vb.
\end{equation}

\paragraph{Dataset.}

We randomly draw $n$~subsets from the $m$-element universe to form a covering $\mathcal C$.
To increase the variance of solutions, we only allow subsets with no more than 3~elements.
As for the Random Constraints demonstration, the dataset consists of 1\,600 pairs $(\vc,\vy^*)$ for training and 1\,000 for testing.
Here, $\vc$ is uniformly sampled positive cost vector and $\vy^*$ denotes the corresponding optimal solution (\Figref{fig:dataset-wsc}).
We generate 10 datasets for each universe size $m = 4, 5, 6, 7, 8$ with $n=2m$ subsets.
\begin{figure}[h]
	\includegraphics[width=\linewidth]{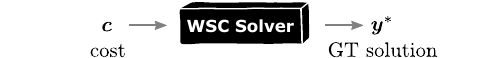}
	\caption{Dataset generation for the \wsc demonstration.}
	\label{fig:dataset-wsc}
\end{figure}

\paragraph{Architecture and Baselines.}
We use the same architecture and compare to the same baselines as in the Random Constraints demonstration~(\Secref{sec:synthetic}).

\paragraph{Results.}

The results are reported in \Figref{fig:wsc-results}.
Our method is still able to predict the correct solution with high accuracy. 
Compared to the previous demonstration, the performance of the LP relaxation deteriorates.
Contrary to the Random Constraints datasets, the solution to the Weighted Set Covering problem never matches the solution of the unconstrained problem, which takes no subset.
This prevents the LP relaxation from exploiting these simple solutions and ultimately leads to a performance drop.
On the other hand, the MLP baseline benefits from the enforced positivity of the cost vector, which leads to an overall reduced number of different solutions in the dataset. 

\begin{figure}[tb]
  \centering
	\centering
	\includegraphics[width=\linewidth]{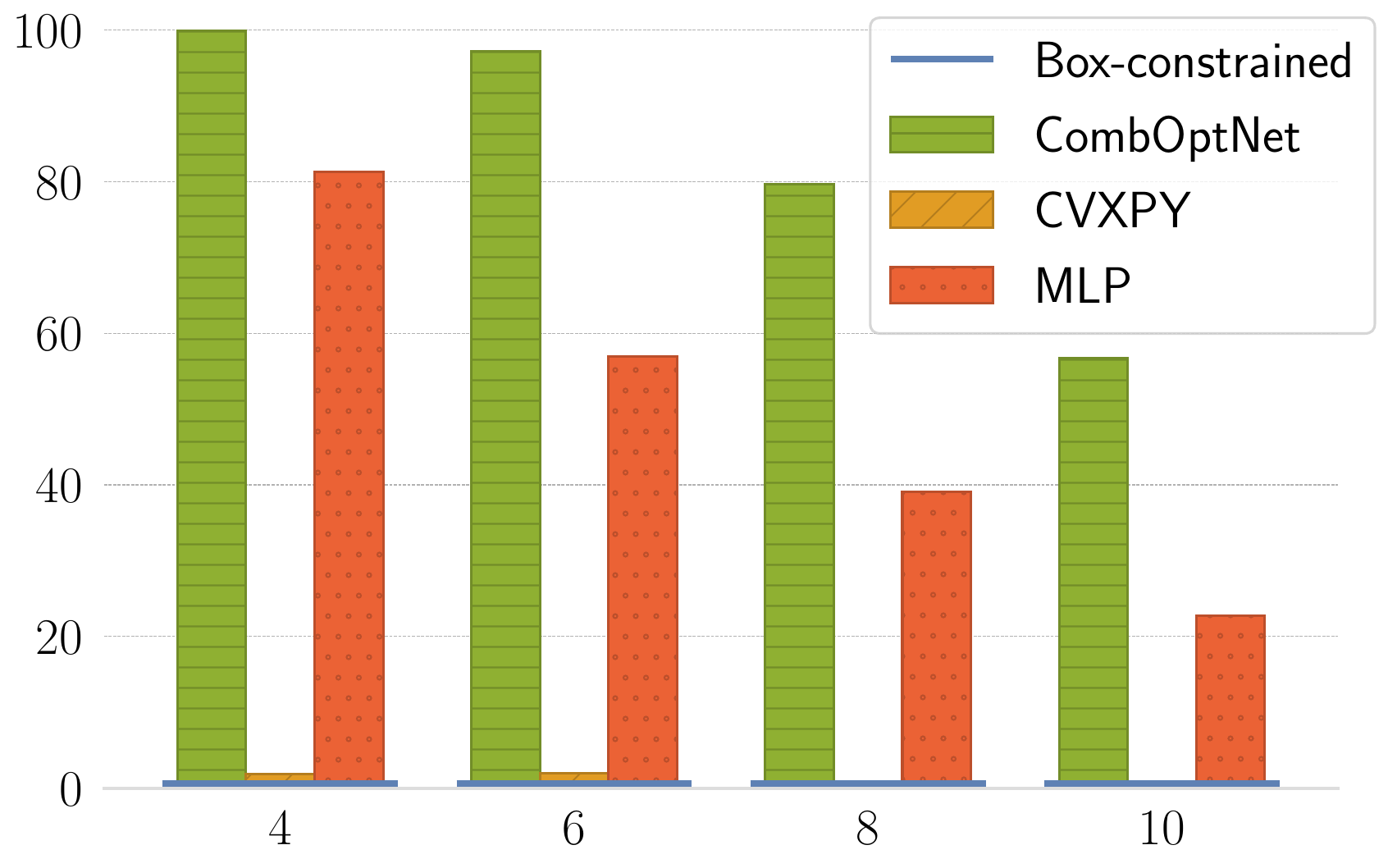}
	\caption{Results of the \wsc demon\-stra\-tion.
	We report mean accuracy ($\vy=\vy^*$ in \%) over 10 datasets for universe sizes $m=4, 6, 8, 10$ and $2m$ subsets.}
\label{fig:wsc-results}
\end{figure}

\subsection{\knapsack from Sentence Description} \label{sec:knapsack}



\paragraph{Problem formulation.}
The task is inspired by a vintage text-based PC game called ``The Knapsack Problem'' \citep{knapsack-game} in which a collection of 10 items is presented to a player including their prices and weights.
The player's goal is to maximize the total price of selected items without exceeding the fixed 100-pound capacity of their knapsack.
The aim is to solve instances of the NP-Hard \knapsack problem~\eqref{ex:knapsack}, from their word descriptions.
Here, the cost $\vc$ and the constraint $(\va,b)$ are learned simultaneously.

\paragraph{Dataset.}
Similarly to the game, a \knapsack instance consists of 10 sentences, each describing one item.
The sentences are preprocessed via the sentence embedding \cite{conneau2017} and the 10 resulting 4\,096-dimensional vectors $\vx$ constitute the input of the dataset.
We rely on the ability of natural language embedding models to capture numerical values, as the other words in the sentence are uncorrelated with them (see an analysis of \citet{wallace2019nlp}).
The indicator vector $\vy^*$ of the optimal solution (\ie item selection) to a knapsack instance is its corresponding label (\Figref{fig:dataset-knapsack}).
The dataset contains 4\,500 training and 500 test pairs $(\vx,\vy^*)$.
\begin{figure}[h]
	\centering
	\includegraphics[width=\linewidth]{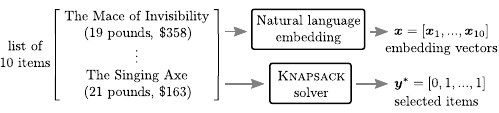}
	\caption{Dataset generation for the \knapsack problem.}
	\label{fig:dataset-knapsack}
\end{figure}


\paragraph{Architecture.}
We simultaneously extract the learnable constraint coefficients~$(\va,\vb)$ and the cost vector~$\vc$ via an MLP from the embedding vectors (\Figref{fig:arch-knapsack}).
\begin{figure}[h]
	\centering
	\includegraphics[width=\linewidth]{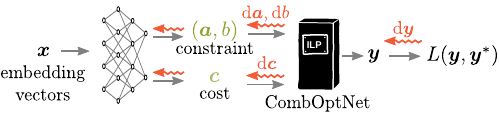}
	\caption{Architecture design for the \knapsack problem.}
	\label{fig:arch-knapsack}
\end{figure}

As only a single learnable constraint is used, which by definition defines a \knapsack problem, the interpretation of this demonstration is a bit different from the other demonstrations.
Instead of learning the type of combinatorial problem, we learn which exact \knapsack problem in terms of item-weights and knapsack capacity needs to be solved.

\paragraph{Baselines.}
We compare to the same baselines as in the Random Constraints demonstration~(\Secref{sec:synthetic}).

\paragraph{Results.}
The results are presented in \Figref{fig:knapsack-results}. While \method is able to predict the correct items for the \knapsack with good accuracy, the baselines are unable to match this.
Additionally, we evaluate the LP relaxation on the ground truth weights and prices, providing an upper bound for results achievable by any method relying on an LP relaxation. The weak performance of this evaluation underlines the NP-Hardness of \knapsack. The ability to embed and differentiate through a dedicated ILP solver leads to surpassing this threshold even when learning from imperfect raw inputs.

\begin{figure}[tb]
	\centering
	\begin{subfigure}[t]{.43\textwidth}
		\centering
		\includegraphics[width=\linewidth]{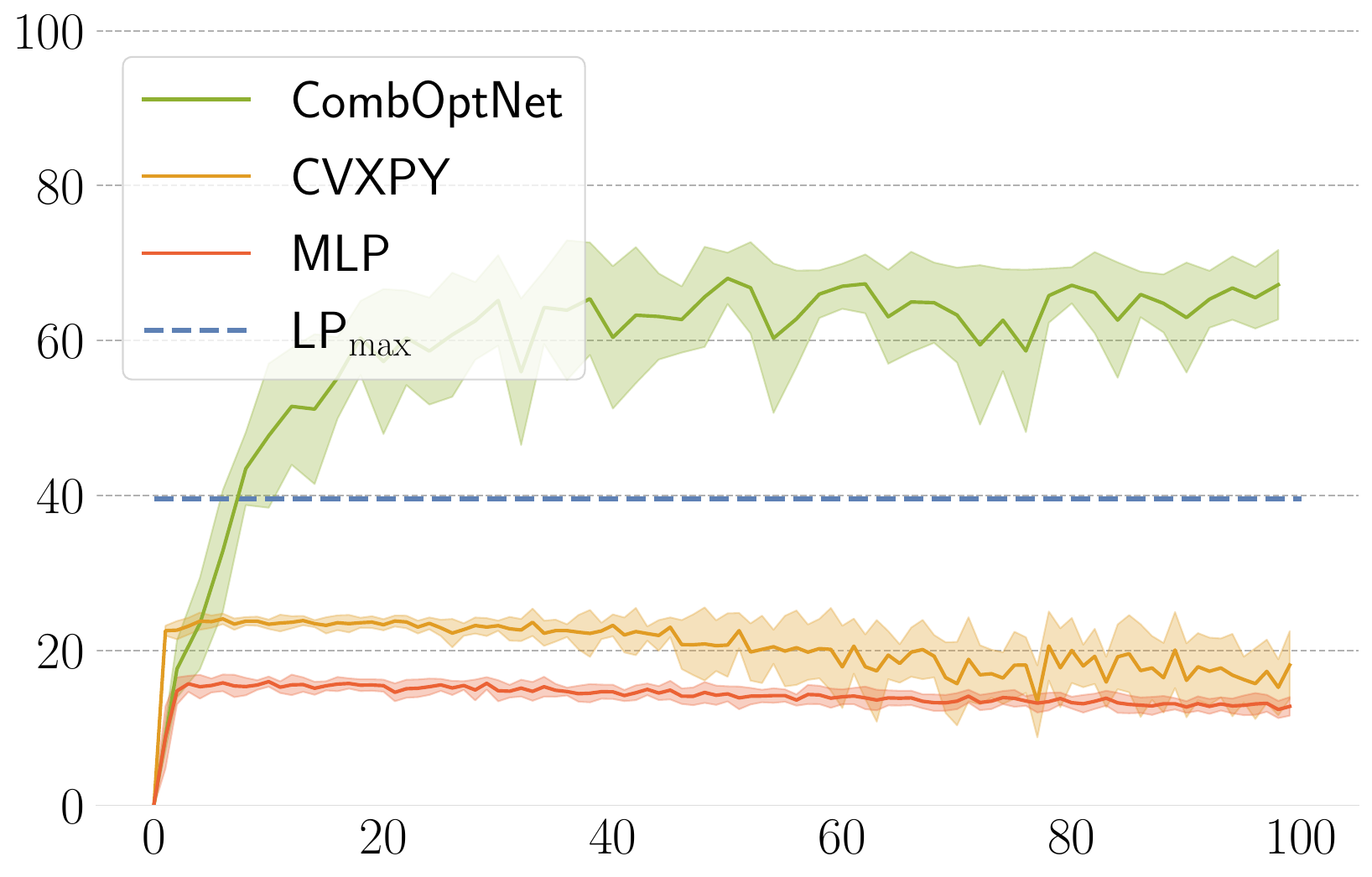}
		\caption{Evaluation accuracy ($\vy=\vy^*$ in \%) over training epochs.
		LP\textsubscript{max} is the maximum achievable LP relaxation accuracy.}
	\end{subfigure}
	\hfill
	\begin{subfigure}[t]{.43\textwidth}
		\centering
		\includegraphics[width=\linewidth]{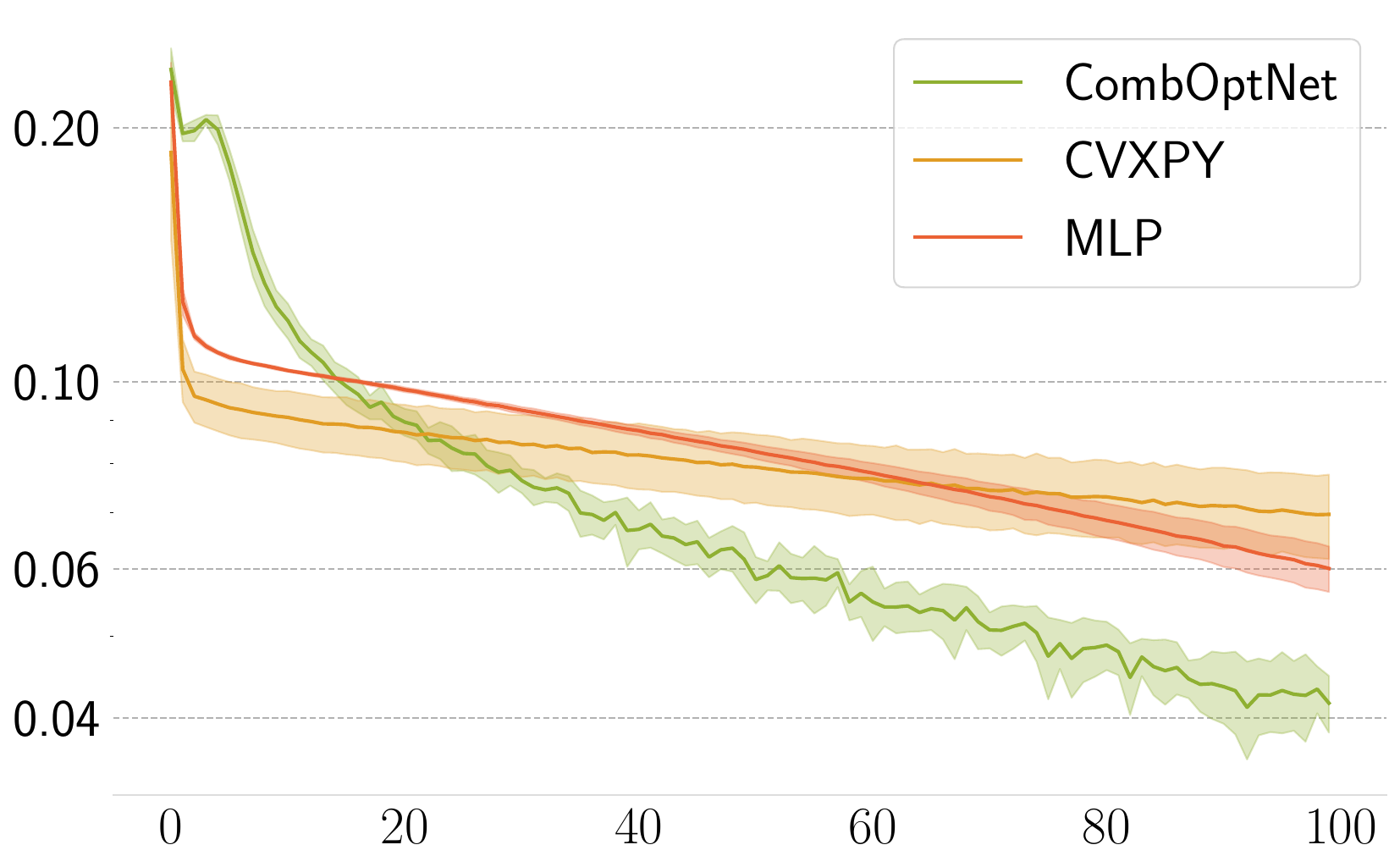}
		\caption{Training MSE loss over epochs.}
	\end{subfigure}
	\caption{Results or \knapsack demonstration.
	Reported error bars are over 10 restarts.
	}
\label{fig:knapsack-results}
\end{figure}

\subsection{Deep Keypoint Matching}

\paragraph{Problem formulation.}

Given are a source and target image showing an object of the same class (\eg \textit{airplane}), each labeled with a set of annotated keypoints (\eg \textit{left wing}).
The task is to find the correct matching between the sets of keypoints from visual information without access to the keypoint annotation.
As not every keypoint has to be visible in both images, some keypoints can also remain unmatched.

As in this task the combinatorial problem is known a priori, state-of-the-art methods are able to exploit this knowledge by using dedicated solvers.
However, we make the problem harder by omitting this knowledge.
Instead, we \emph{simultaneously} infer the problem specification \emph{and} train the feature extractor for the cost vector from data end-to-end.

\paragraph{Dataset.}

We use the SPair-71k dataset \cite{min2019spair71k} which was published in the context of dense image matching and was used as a benchmark for keypoint matching in recent literature \citep{rolinek2020deep}.
It includes 70\,958 image pairs prepared from Pascal VOC 2012 and Pascal 3D+ with rich pair-level keypoint annotations.
The dataset is split into 53\,340 training pairs, 5\,384 validation pairs and 12\,234 pairs for testing.

\begin{table}[tb]
\renewcommand{\arraystretch}{1.2}
\small
\centering
\caption{Results for the keypoint matching demonstration. Reported is the standard per-variable accuracy $(\%)$ metric over 5 restarts. Column $p\times p$ corresponds to matching $p$ source keypoints to $p$ target keypoints.
Original \bbgm has access to unary and quadratic costs; we also report performance with access only to unary costs as in~\method.
}
	\begin{tabular}{lcccc}
	\toprule
	\textbf{Method} & $4\times 4$ & $5\times 5$ & $6\times 6$ & $7\times 7$
		\\ \midrule
		\method & $83.1$ & $80.7$ & $78.6$ & $76.1$
		\\
		\bbgm (unary only) & $84.3$ & $81.6$ & $79.0$ & $76.5$
		\\
		\bbgm (unary \& quad.) & $84.3$ & $82.9$ & $80.5$ & $79.8$
		\\
	\bottomrule
	\end{tabular}
\label{tab:matching}
\end{table}

\paragraph{State-of-the-art.}
We compare to a state-of-the-art architecture \bbgm \citep{rolinek2020deep} that employs a dedicated solver for the quadratic assignment problem.
Given a pair of images and their corresponding sets of keypoint locations, the method constructs for each image a graph with the nodes corresponding to the keypoints. 
A network then predicts the unary and quadratic costs for matching the nodes and edges between the two graphs. 
A heavily optimized solver for the quadratic \textsc{assignment} problem then computes a consistent matching of the nodes.
When training the architecture, the incoming gradient is backpropagated through the solver to the unary and quadratic costs via blackbox backpropagation \citep{vlastelica2019differentiation}.

\paragraph{Architecture.}
We modify the \bbgm architecture by replacing the blackbox backpropagation module employing the dedicated solver with \method.

The drop-in replacement comes with a few important considerations.
Note that our method relies on a fixed dimensionality $n$ of the problem for learning a static (\ie not input-dependent) constraint set. 
However, in the \bbgm architecture the dimensionality of the predicted costs depends on the number of nodes (\ie keypoints) $p$ and number of edges in each of the two matched graphs.
As these quantities vary over the dataset, the dimensionality varies as well.
To avoid these issues, we resort to a simplified setting.

First, we fix the number of keypoints in both images.
For each $p=4,5,6,7$, we generate an augmented dataset by randomly removing additional keypoints in images with more than $p$ keypoints.
The images with fewer keypoints than $p$ are dropped.
Note that even for a fixed number of keypoints the number of edges (and hence the quadratic cost dimensionality) can vary.
Therefore, we omit the quadratic costs as an input to the ILP solver. 

These simplifications result in a fixed ILP dimensionality of $n=p^2$.
The number of learnable constraints coincides with the number of constraints in the corresponding unary \textsc{assignment} problem, \ie the combined number of keypoints in both images~$(m=2p)$.

The randomly initialized constraint set and the backbone architecture that produces the cost vectors are learned simultaneously from pairs of predicted solutions and \gt matchings using~\method.


\paragraph{Results.}
We report two results for the \bbgm architecture depending on the information available to the solver.
In the unary setting, the solver utilizes only unary costs, in the quadratic setting, it utilizes both unary and quadratic costs. The ILP solver is restricted to the unary setting.

Even though the task-agnostic \method is uninformed about the underlying combinatorial problem, its performance is very close to the privileged state-of-the-art method \bbgm, especially when \bbgm is restricted to use the same information (unary costs only).
These results are especially satisfactory, considering that \bbgm outperforms the previous state-of-the-art architecture \citep{fey2020deep} by several percentage points on experiments of this difficulty.
Example matchings are shown in \Figref{fig:matching-examples}. 


\begin{figure}[t]
 \setlength{\belowcaptionskip}{-0.5\baselineskip}
 \includegraphics[width=0.49\linewidth]{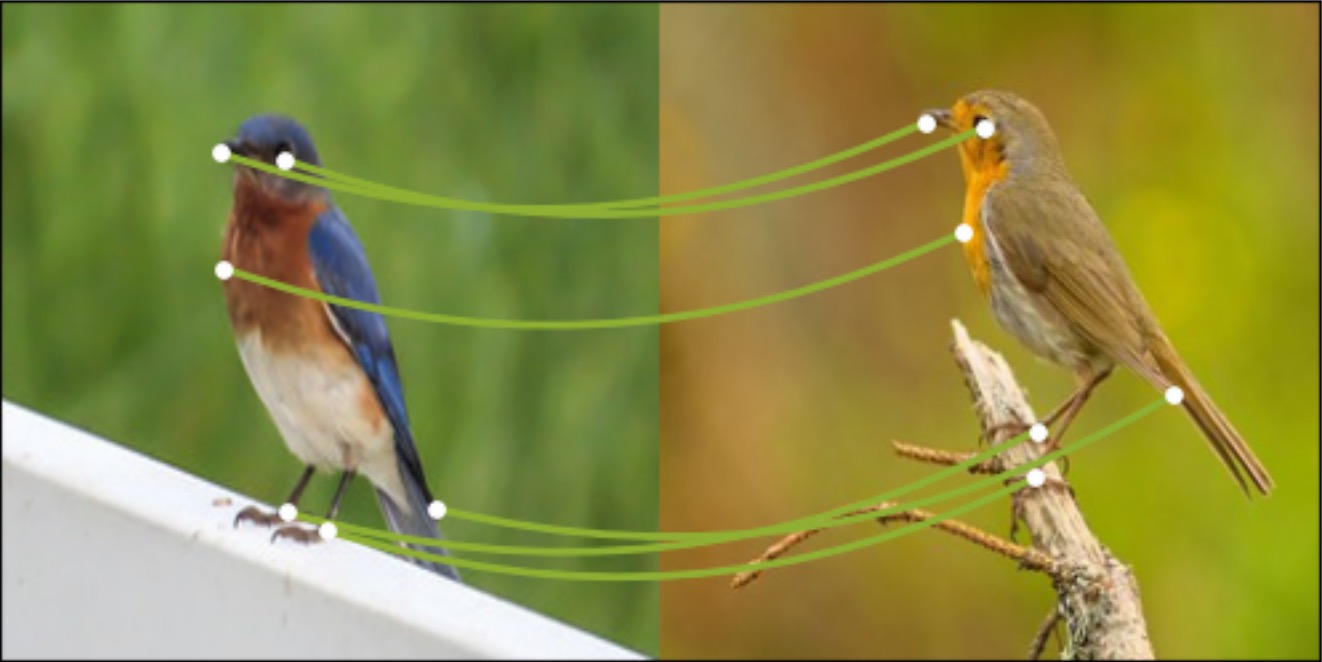}
 \hfil
 \includegraphics[width=0.49\linewidth]{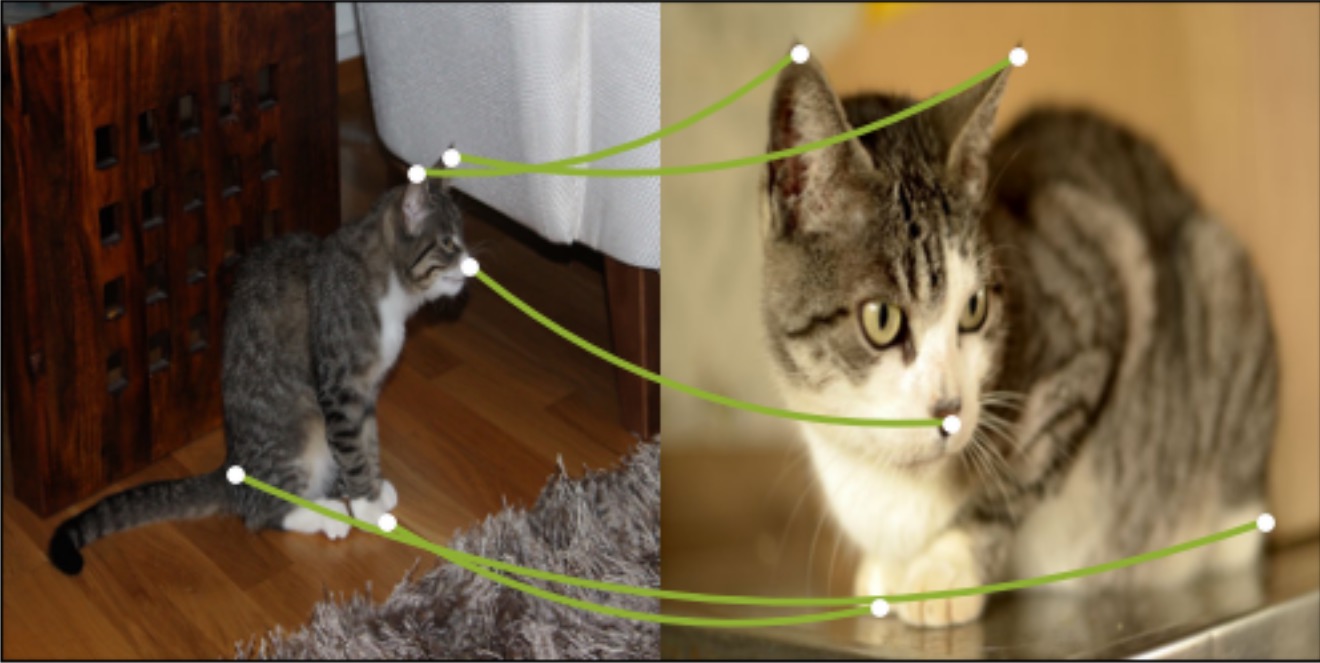}
 \\ [2pt]
 \includegraphics[width=0.49\linewidth]{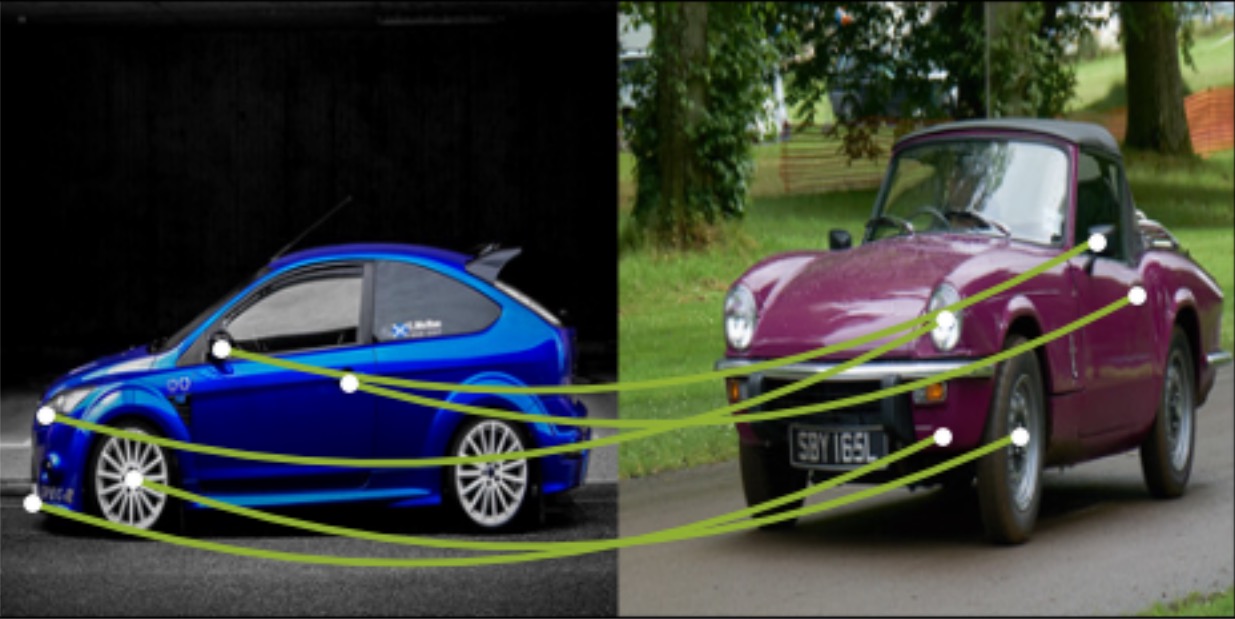}
 \hfil
 \includegraphics[width=0.49\linewidth]{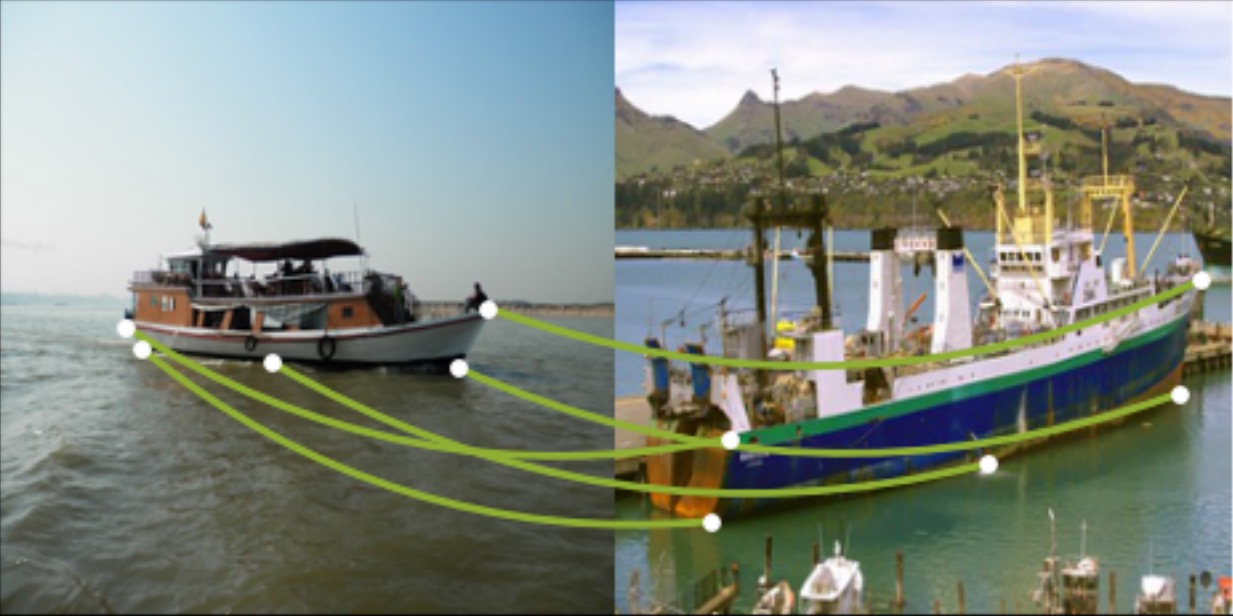}
 \\ [2pt]
 \includegraphics[width=0.49\linewidth]{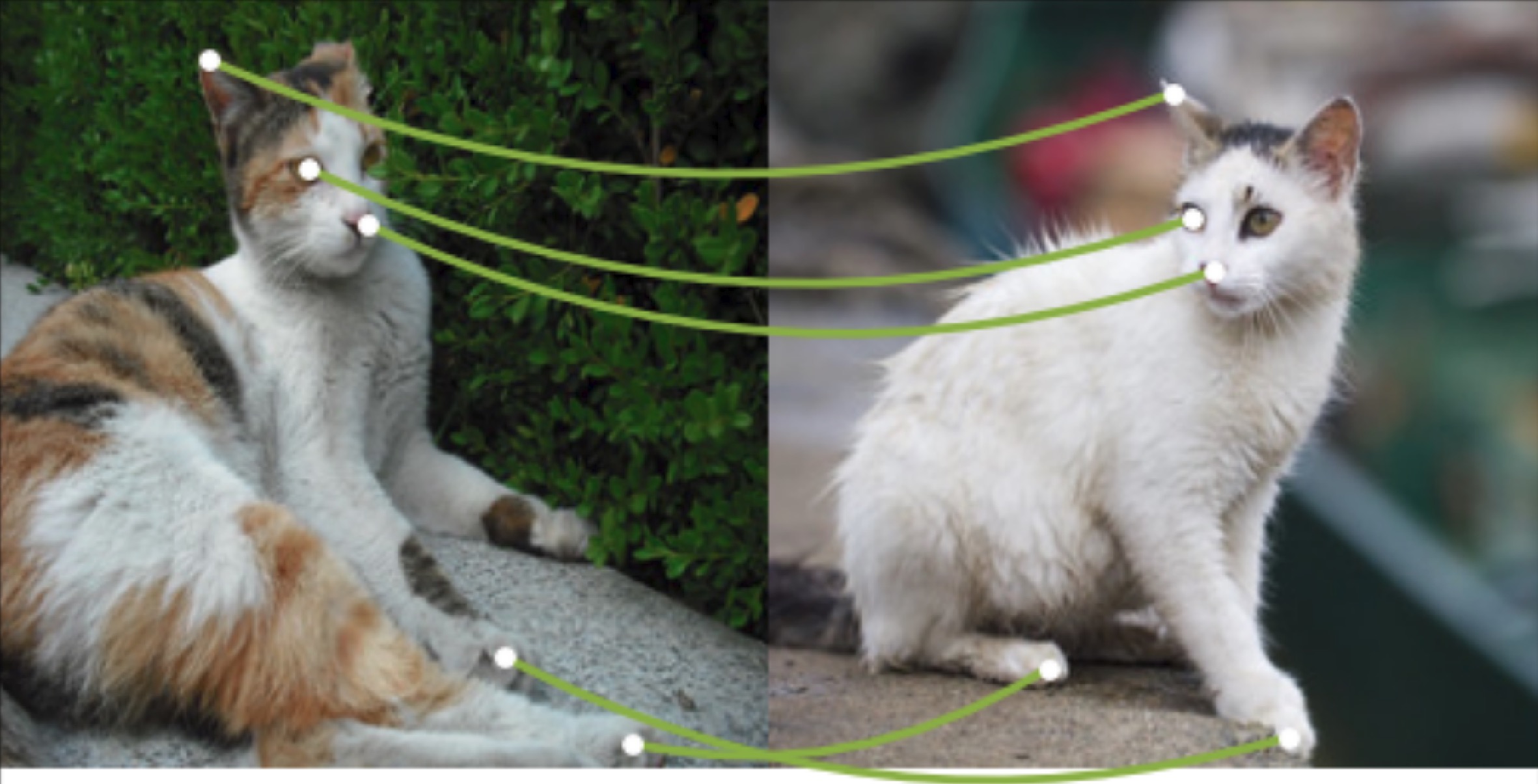}
 \hfil
 \includegraphics[width=0.49\linewidth]{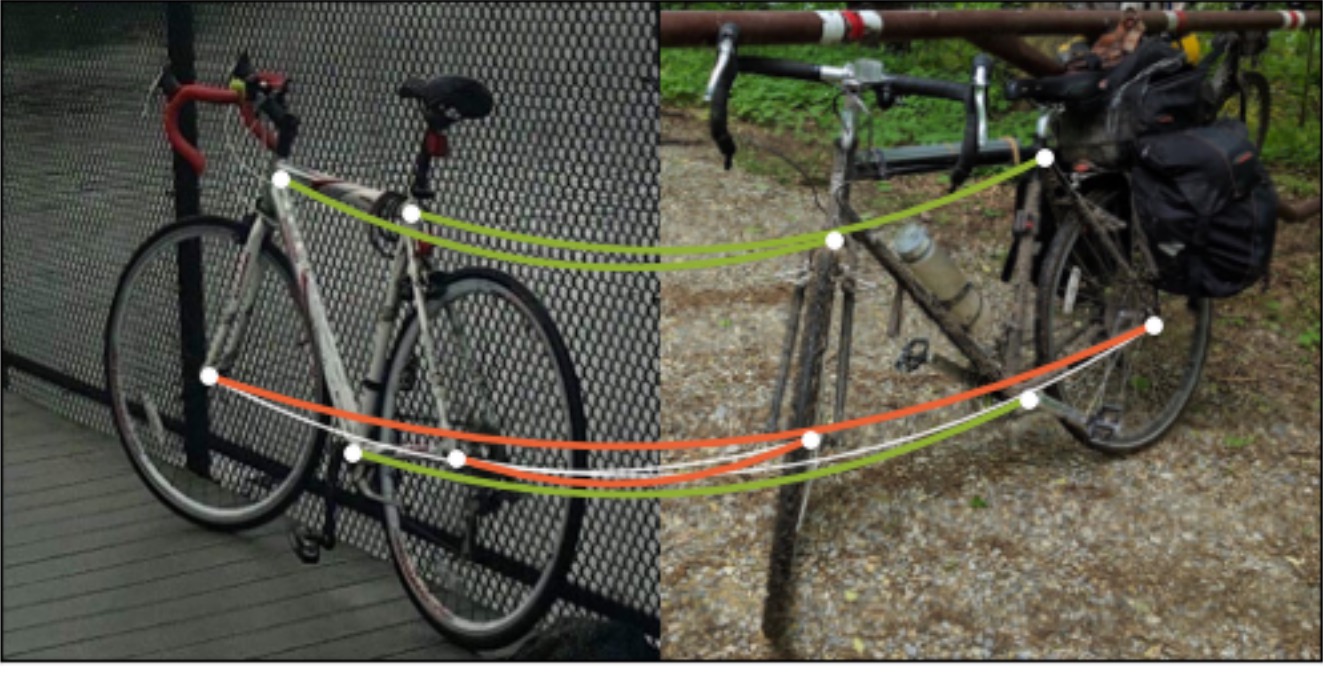}
 \\ [1.7pt]
 \includegraphics[width=0.49\linewidth]{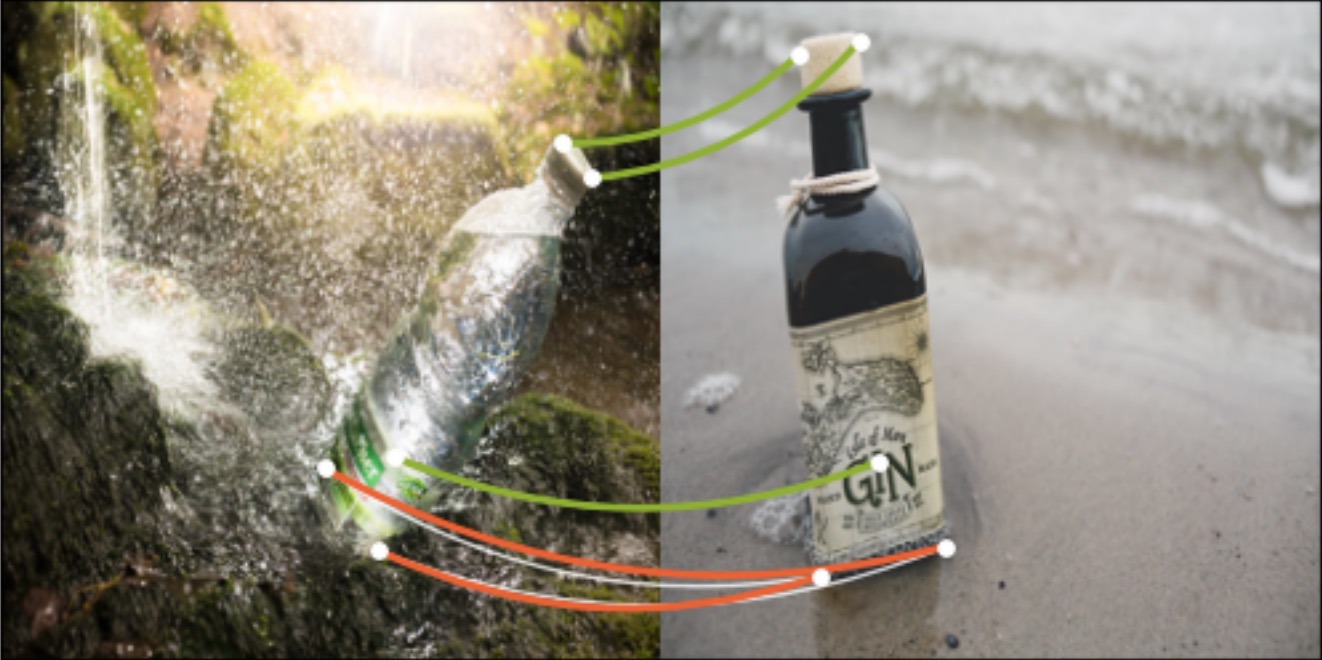}
 \hfil
 \includegraphics[width=0.49\linewidth]{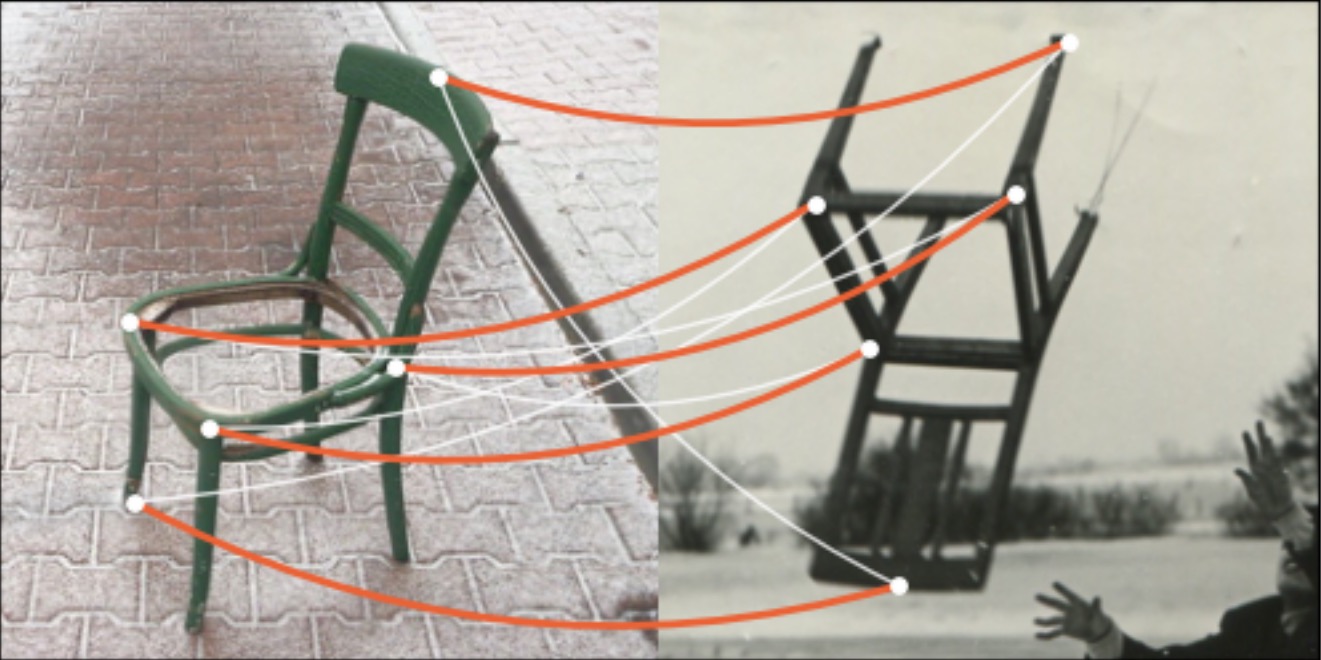}

 \caption{Example matchings predicted by \method.}
 \label{fig:matching-examples}
\end{figure}

\section{Conclusion}
We propose a method for integrating integer linear program solvers into neural network architectures as layers.
This is enabled by providing gradients for both the \emph{cost terms} and the \emph{constraints} of an ILP.
The resulting end-to-end trainable architectures are able to \emph{simultaneously} extract features from raw data and learn a suitable set of constraints that specify the combinatorial problem.
Thus, the architecture learns to fit the right NP-hard problem needed to solve the task.
In that sense, it strives to achieve universal combinatorial expressivity in deep networks---opening many exciting perspectives.

In the experiments, we demonstrate the flexibility of our approach,
using different input domains, natural language and images, and different combinatorial problems with the same \method module.
In particular, for combinatorially hard problems we see a strong advantage of the new architecture.

%
%
The potential of our method is highlighted by the demonstration on the keypoint matching benchmark.
Unaware of the underlying combinatorial problem, \method achieves a performance that is not far behind architectures employing dedicated state-of-the-art solvers.



In future work, we aim to make the number of constraints flexible and to explore more problems with hybrid combinatorial complexity and statistical learning aspects.

 %

\section*{Acknowledgements}
Georg Martius is a member of the Machine Learning Cluster of Excellence, funded by the Deutsche Forschungsgemeinschaft (DFG, German Research Foundation) under Germany’s Excellence Strategy -- EXC number 2064/1 -- Project number 390727645. We acknowledge the support from the German Federal Ministry of Education and Research (BMBF) through the Tübingen AI Center (FKZ: 01IS18039B).
This work was supported from Operational Programme Research, Development and Education -- Project Postdoc2MUNI (No.\ CZ.02.2.69/0.0/0.0/18\_053/0016952)

\bibliographystyle{icml2021}
\bibliography{arxiv_v2}

\begin{thebibliography}{47}
\providecommand{\natexlab}[1]{#1}
\providecommand{\url}[1]{\texttt{#1}}
\expandafter\ifx\csname urlstyle\endcsname\relax
  \providecommand{\doi}[1]{doi: #1}\else
  \providecommand{\doi}{doi: \begingroup \urlstyle{rm}\Url}\fi

\bibitem[Agrawal et~al.(2019{\natexlab{a}})Agrawal, Amos, Barratt, Boyd,
  Diamond, and Kolter]{agrawal2019differentiable}
Agrawal, A., Amos, B., Barratt, S., Boyd, S., Diamond, S., and Kolter, J.~Z.
\newblock Differentiable convex optimization layers.
\newblock In \emph{Advances in Neural Information Processing Systems}, pp.\
  9562--9574, 2019{\natexlab{a}}.

\bibitem[Agrawal et~al.(2019{\natexlab{b}})Agrawal, Barratt, Boyd, Busseti, and
  Moursi]{agrawal2019differentiating}
Agrawal, A., Barratt, S., Boyd, S., Busseti, E., and Moursi, W.~M.
\newblock Differentiating through a cone program.
\newblock \emph{J. Appl. Numer. Optim}, 1\penalty0 (2):\penalty0 107--115,
  2019{\natexlab{b}}.

\bibitem[Amos(2019)]{amos2019differentiable}
Amos, B.
\newblock \emph{Differentiable optimization-based modeling for machine
  learning}.
\newblock PhD thesis, PhD thesis. Carnegie Mellon University, 2019.

\bibitem[Amos \& Kolter(2017)Amos and Kolter]{amos2017optnet}
Amos, B. and Kolter, J.~Z.
\newblock Optnet: Differentiable optimization as a layer in neural networks.
\newblock In \emph{International Conference on Machine Learning}, pp.\
  136--145, 2017.

\bibitem[Amos \& Yarats(2020)Amos and Yarats]{amos2019cem}
Amos, B. and Yarats, D.
\newblock The differentiable cross-entropy method.
\newblock In \emph{International Conference on Machine Learning}, pp.\
  291--302, 2020.

\bibitem[Balcan et~al.(2018)Balcan, Dick, Sandholm, and
  Vitercik]{balcan2018learning}
Balcan, M.-F., Dick, T., Sandholm, T., and Vitercik, E.
\newblock Learning to branch.
\newblock In \emph{International conference on machine learning}, pp.\
  344--353, 2018.

\bibitem[Balunovic et~al.(2018)Balunovic, Bielik, and
  Vechev]{balunovic2018learning}
Balunovic, M., Bielik, P., and Vechev, M.
\newblock Learning to solve {SMT} formulas.
\newblock In \emph{Advances in Neural Information Processing Systems}, pp.\
  10317--10328, 2018.

\bibitem[Battaglia et~al.(2018)Battaglia, Hamrick, Bapst, Sanchez, Zambaldi,
  Malinowski, Tacchetti, Raposo, Santoro, Faulkner, Gulcehre, Song, Ballard,
  Gilmer, Dahl, Vaswani, Allen, Nash, Langston, Dyer, Heess, Wierstra, Kohli,
  Botvinick, Vinyals, Li, and Pascanu]{battaglia2018}
Battaglia, P., Hamrick, J. B.~C., Bapst, V., Sanchez, A., Zambaldi, V.,
  Malinowski, M., Tacchetti, A., Raposo, D., Santoro, A., Faulkner, R.,
  Gulcehre, C., Song, F., Ballard, A., Gilmer, J., Dahl, G.~E., Vaswani, A.,
  Allen, K., Nash, C., Langston, V.~J., Dyer, C., Heess, N., Wierstra, D.,
  Kohli, P., Botvinick, M., Vinyals, O., Li, Y., and Pascanu, R.
\newblock Relational inductive biases, deep learning, and graph networks.
\newblock \emph{arXiv:1806.01261}, 2018.

\bibitem[Bello et~al.(2016)Bello, Pham, Le, Norouzi, and
  Bengio]{bello2016neural}
Bello, I., Pham, H., Le, Q.~V., Norouzi, M., and Bengio, S.
\newblock Neural combinatorial optimization with reinforcement learning.
\newblock \emph{arXiv:1611.09940}, 2016.

\bibitem[Berthet et~al.(2020)Berthet, Blondel, Teboul, Cuturi, Vert, and
  Bach]{berthet2020learning}
Berthet, Q., Blondel, M., Teboul, O., Cuturi, M., Vert, J.-P., and Bach, F.
\newblock Learning with differentiable perturbed optimizers.
\newblock In \emph{Advances in Neural Information Processing Systems}, pp.\
  9508--9519, 2020.

\bibitem[Bertinetto et~al.(2019)Bertinetto, Henriques, Torr, and
  Vedaldi]{bertinetto2018meta}
Bertinetto, L., Henriques, J.~F., Torr, P.~H., and Vedaldi, A.
\newblock Meta-learning with differentiable closed-form solvers.
\newblock In \emph{International Conference on Learning Representations}, 2019.

\bibitem[Conneau et~al.(2017)Conneau, Kiela, Schwenk, Barrault, and
  Bordes]{conneau2017}
Conneau, A., Kiela, D., Schwenk, H., Barrault, L., and Bordes, A.
\newblock Supervised learning of universal sentence representations from
  natural language inference data.
\newblock In \emph{Conference on Empirical Methods in Natural Language
  Processing}, pp.\  670--680, Copenhagen, Denmark, 2017. Association for
  Computational Linguistics.

\bibitem[Cplex(2009)]{cplex2009v12}
Cplex, I.~I.
\newblock {V12. 1: User's Manual for CPLEX}.
\newblock \emph{International Business Machines Corporation}, 46\penalty0
  (53):\penalty0 157, 2009.

\bibitem[Dalal et~al.(2018)Dalal, Dvijotham, Vecerik, Hester, Paduraru, and
  Tassa]{dalal2018safe}
Dalal, G., Dvijotham, K., Vecerik, M., Hester, T., Paduraru, C., and Tassa, Y.
\newblock Safe exploration in continuous action spaces.
\newblock \emph{arXiv:1801.08757}, 2018.

\bibitem[Diamond \& Boyd(2016)Diamond and Boyd]{diamond2016CVXPY}
Diamond, S. and Boyd, S.
\newblock {CVXPY}: {A} {P}ython-embedded modeling language for convex
  optimization.
\newblock \emph{Journal of Machine Learning Research}, 17\penalty0
  (83):\penalty0 1--5, 2016.

\bibitem[Djolonga \& Krause(2017)Djolonga and
  Krause]{djolonga2017differentiable}
Djolonga, J. and Krause, A.
\newblock Differentiable learning of submodular models.
\newblock In \emph{Advances in Neural Information Processing Systems}, pp.\
  1013--1023, 2017.

\bibitem[Domke(2012)]{domke2012generic}
Domke, J.
\newblock Generic methods for optimization-based modeling.
\newblock In \emph{Artificial Intelligence and Statistics}, pp.\  318--326,
  2012.

\bibitem[Donti et~al.(2017)Donti, Amos, and Kolter]{donti2017task}
Donti, P., Amos, B., and Kolter, J.~Z.
\newblock Task-based end-to-end model learning in stochastic optimization.
\newblock In \emph{Advances in Neural Information Processing Systems}, pp.\
  5484--5494, 2017.

\bibitem[Elmachtoub \& Grigas(2020)Elmachtoub and Grigas]{elmachtoub2020smart}
Elmachtoub, A.~N. and Grigas, P.
\newblock Smart ``predict, then optimize''.
\newblock \emph{arXiv:1710.08005}, 2020.

\bibitem[Ferber et~al.(2020)Ferber, Wilder, Dilkina, and
  Tambe]{ferber2020mipaal}
Ferber, A., Wilder, B., Dilkina, B., and Tambe, M.
\newblock {MIPaaL}: {M}ixed integer program as a layer.
\newblock In \emph{AAAI Conference on Artificial Intelligence}, volume~34, pp.\
   1504--1511, 2020.

\bibitem[Fey et~al.(2020)Fey, Lenssen, Morris, Masci, and Kriege]{fey2020deep}
Fey, M., Lenssen, J.~E., Morris, C., Masci, J., and Kriege, N.~M.
\newblock Deep graph matching consensus.
\newblock In \emph{International Conference on Learning Representations}, 2020.

\bibitem[Gould et~al.(2016)Gould, Fernando, Cherian, Anderson, Cruz, and
  Guo]{gould2016differentiating}
Gould, S., Fernando, B., Cherian, A., Anderson, P., Cruz, R.~S., and Guo, E.
\newblock On differentiating parameterized argmin and argmax problems with
  application to bi-level optimization.
\newblock \emph{arXiv:1607.05447}, 2016.

\bibitem[Graves et~al.(2014)Graves, Wayne, and Danihelka]{graves2014neural}
Graves, A., Wayne, G., and Danihelka, I.
\newblock Neural turing machines.
\newblock \emph{arXiv:1410.5401}, 2014.

\bibitem[Graves et~al.(2016)Graves, Wayne, Reynolds, Harley, Danihelka,
  Grabska-Barwi{\'n}ska, Colmenarejo, Grefenstette, Ramalho, Agapiou, Badia,
  Hermann, Zwols, Ostrovski, Cain, King, Summerfield, Blunsom, Kavukcuoglu, and
  Hassabis]{graves2016hybrid}
Graves, A., Wayne, G., Reynolds, M., Harley, T., Danihelka, I.,
  Grabska-Barwi{\'n}ska, A., Colmenarejo, S.~G., Grefenstette, E., Ramalho, T.,
  Agapiou, J., Badia, A.~P., Hermann, K.~M., Zwols, Y., Ostrovski, G., Cain,
  A., King, H., Summerfield, C., Blunsom, P., Kavukcuoglu, K., and Hassabis, D.
\newblock Hybrid computing using a neural network with dynamic external memory.
\newblock \emph{Nature}, 538\penalty0 (7626):\penalty0 471--476, October 2016.

\bibitem[Gurobi~Optimization(2019)]{gurobi}
Gurobi~Optimization, L.
\newblock Gurobi optimizer reference manual, 2019.
\newblock URL \url{http://www.gurobi.com}.

\bibitem[Khalil et~al.(2017)Khalil, Dai, Zhang, Dilkina, and
  Song]{khalil2017learning}
Khalil, E., Dai, H., Zhang, Y., Dilkina, B., and Song, L.
\newblock Learning combinatorial optimization algorithms over graphs.
\newblock In \emph{Advances in Neural Information Processing Systems}, pp.\
  6348--6358, 2017.

\bibitem[Kingma \& Ba(2014)Kingma and Ba]{kingma2014adam}
Kingma, D.~P. and Ba, J.
\newblock {Adam}: A method for stochastic optimization.
\newblock In \emph{International Conference on Learning Representations}, 2014.

\bibitem[Kool et~al.(2018)Kool, van Hoof, and Welling]{kool2018attention}
Kool, W., van Hoof, H., and Welling, M.
\newblock Attention, learn to solve routing problems!
\newblock In \emph{International Conference on Learning Representations}, 2018.

\bibitem[Lee et~al.(2019)Lee, Maji, Ravichandran, and Soatto]{lee2019meta}
Lee, K., Maji, S., Ravichandran, A., and Soatto, S.
\newblock Meta-learning with differentiable convex optimization.
\newblock In \emph{Conference on Computer Vision and Pattern Recognition}, pp.\
   10657--10665, 2019.

\bibitem[Ling et~al.(2018)Ling, Fang, and Kolter]{ling2018game}
Ling, C.~K., Fang, F., and Kolter, J.~Z.
\newblock What game are we playing? {E}nd-to-end learning in normal and
  extensive form games.
\newblock In \emph{International Joint Conference on Artificial Intelligence},
  2018.

\bibitem[Min et~al.(2019)Min, Lee, Ponce, and Cho]{min2019spair71k}
Min, J., Lee, J., Ponce, J., and Cho, M.
\newblock {SPair-71k: A Large-scale Benchmark for Semantic Correspondence}.
\newblock \emph{arXiv:1908.10543}, 2019.

\bibitem[Nair et~al.(2020)Nair, Bartunov, Gimeno, von Glehn, Lichocki, Lobov,
  O'Donoghue, Sonnerat, Tjandraatmadja, Wang, Addanki, Hapuarachchi, Keck,
  Keeling, Kohli, Ktena, Li, Vinyals, and Zwols]{nair2020solving}
Nair, V., Bartunov, S., Gimeno, F., von Glehn, I., Lichocki, P., Lobov, I.,
  O'Donoghue, B., Sonnerat, N., Tjandraatmadja, C., Wang, P., Addanki, R.,
  Hapuarachchi, T., Keck, T., Keeling, J., Kohli, P., Ktena, I., Li, Y.,
  Vinyals, O., and Zwols, Y.
\newblock Solving mixed integer programs using neural networks.
\newblock \emph{arXiv:2012.13349}, 2020.

\bibitem[Nazari et~al.(2018)Nazari, Oroojlooy, Snyder, and
  Tak{\'a}c]{nazari2018reinforcement}
Nazari, M., Oroojlooy, A., Snyder, L., and Tak{\'a}c, M.
\newblock Reinforcement learning for solving the vehicle routing problem.
\newblock In \emph{Advances in Neural Information Processing Systems}, pp.\
  9839--9849, 2018.

\bibitem[Niculae et~al.(2018)Niculae, Martins, Blondel, and
  Cardie]{niculae2018sparsemap}
Niculae, V., Martins, A., Blondel, M., and Cardie, C.
\newblock Sparsemap: Differentiable sparse structured inference.
\newblock In \emph{International Conference on Machine Learning}, pp.\
  3799--3808, 2018.

\bibitem[Richardson(2001)]{knapsack-game}
Richardson, L.
\newblock The knapsack problem, the game of premature optimization, 2001.
\newblock URL \url{https://www.crummy.com/software/if/knapsack/}.

\bibitem[Rol{\'\i}nek et~al.(2020{\natexlab{a}})Rol{\'\i}nek, Musil, Paulus,
  Vlastelica, Michaelis, and Martius]{rolinek2020cvpr}
Rol{\'\i}nek, M., Musil, V., Paulus, A., Vlastelica, M., Michaelis, C., and
  Martius, G.
\newblock Optimizing ranking-based metrics with blackbox differentiation.
\newblock In \emph{Conference on Computer Vision and Pattern Recognition},
  2020{\natexlab{a}}.

\bibitem[Rol{\'\i}nek et~al.(2020{\natexlab{b}})Rol{\'\i}nek, Swoboda, Zietlow,
  Paulus, Musil, and Martius]{rolinek2020deep}
Rol{\'\i}nek, M., Swoboda, P., Zietlow, D., Paulus, A., Musil, V., and Martius,
  G.
\newblock Deep graph matching via blackbox differentiation of combinatorial
  solvers.
\newblock In \emph{European Conference on Computer Vision}, pp.\  407--424,
  2020{\natexlab{b}}.

\bibitem[Tan et~al.(2020)Tan, Terekhov, and Delong]{tan2020learning}
Tan, Y., Terekhov, D., and Delong, A.
\newblock Learning linear programs from optimal decisions.
\newblock In \emph{Advances in Neural Information Processing Systems}, pp.\
  19738--19749, 2020.

\bibitem[Veli{\v{c}}kovi{\'c} et~al.(2018)Veli{\v{c}}kovi{\'c}, Cucurull,
  Casanova, Romero, Lio, and Bengio]{velivckovic2017graph}
Veli{\v{c}}kovi{\'c}, P., Cucurull, G., Casanova, A., Romero, A., Lio, P., and
  Bengio, Y.
\newblock Graph attention networks.
\newblock In \emph{International Conference on Learning Representations}, 2018.

\bibitem[Veli{\v c}kovi{\'c} et~al.(2020)Veli{\v c}kovi{\'c}, Ying, Padovano,
  Hadsell, and Blundell]{velivckovic2019neural}
Veli{\v c}kovi{\'c}, P., Ying, R., Padovano, M., Hadsell, R., and Blundell, C.
\newblock Neural execution of graph algorithms.
\newblock In \emph{International Conference on Learning Representations}, 2020.

\bibitem[Vlastelica et~al.(2020{\natexlab{a}})Vlastelica, Paulus, Musil,
  Martius, and Rol\'inek]{vlastelica2019differentiation}
Vlastelica, M., Paulus, A., Musil, V., Martius, G., and Rol\'inek, M.
\newblock Differentiation of blackbox combinatorial solvers.
\newblock In \emph{International Conference on Learning Representations},
  2020{\natexlab{a}}.

\bibitem[Vlastelica et~al.(2020{\natexlab{b}})Vlastelica, Rolinek, and
  Martius]{vlastelica2020neuroalgorithmic}
Vlastelica, M., Rolinek, M., and Martius, G.
\newblock Discrete planning with end-to-end trained neuro-algorithmic policies.
\newblock \emph{ICML 2020, Graph Representation Learning Workshop},
  2020{\natexlab{b}}.

\bibitem[Wallace et~al.(2019)Wallace, Wang, Li, Singh, and
  Gardner]{wallace2019nlp}
Wallace, E., Wang, Y., Li, S., Singh, S., and Gardner, M.
\newblock Do {NLP} models know numbers? {P}robing numeracy in embeddings.
\newblock In \emph{Conference on Empirical Methods in Natural Language
  Processing and the 9th International Joint Conference on Natural Language
  Processing}, pp.\  5310--5318, 2019.

\bibitem[Wang et~al.(2019)Wang, Donti, Wilder, and Kolter]{wang2019satnet}
Wang, P.-W., Donti, P., Wilder, B., and Kolter, Z.
\newblock {SATNet:} {B}ridging deep learning and logical reasoning using a
  differentiable satisfiability solver.
\newblock In \emph{International Conference on Machine Learning}, pp.\
  6545--6554, 2019.

\bibitem[Wilder et~al.(2019)Wilder, Ewing, Dilkina, and Tambe]{wilder2019end}
Wilder, B., Ewing, E., Dilkina, B., and Tambe, M.
\newblock End to end learning and optimization on graphs.
\newblock In \emph{Advances in Neural Information Processing Systems}, pp.\
  4672--4683, 2019.

\bibitem[Zanfir \& Sminchisescu(2018)Zanfir and Sminchisescu]{zanfir2018deep}
Zanfir, A. and Sminchisescu, C.
\newblock Deep learning of graph matching.
\newblock In \emph{Conference on Computer Vision and Pattern Recognition}, pp.\
   2684--2693, 2018.

\bibitem[Zhang \& Dietterich(2000)Zhang and Dietterich]{zhang2000solving}
Zhang, W. and Dietterich, T.~G.
\newblock Solving combinatorial optimization tasks by reinforcement learning: A
  general methodology applied to resource-constrained scheduling.
\newblock \emph{Journal of Artificial Intelligence Reseach}, 1:\penalty0 1--38,
  2000.

\end{thebibliography}

\appendix

\section{Demonstrations}

The code for all demonstrations is available at

\smallskip
\centerline{\href{https://github.com/martius-lab/CombOptNet}{github.com/martius-lab/CombOptNet}.}

\subsection{Implementation Details}

When learning multiple constraints, we replace the minimum in definition~\eqref{E:proxy-constraints} of mismatch function $P_{\Delta_k}$ with its softened version.
Therefore, not only the single closest constraint will shift towards $\vy'_k$, but also other constraints close to $\vy'_k$ will do.
For the softened minimum we use 
\begin{equation} \label{sup:E:softmin}
    \operatorname{softmin}(\vx)
			= -\tau\cdot \log\biggl(\sum_{k} \exp\left(-\frac{x_k}{\tau}\right)\biggr),
\end{equation}
which introduces the temperature $\tau$, determining the softening strength.

In all experiments, we normalize the cost vector $\vc$ before we forward it to the \method module.
For the loss we use the mean squared error between the normalized predicted solution $\vy$ and the normalized \gt solution $\vy^*$.
For normalization we apply the shift and scale that translates the underlying hypercube of possible solutions ($[0, 1]^n$ in \emph{binary} or $[-5, 5]^n$ in \emph{dense} case) to a normalized hypercube~$[-0.5, 0.5]^n$.


The hyperparameters for all demonstrations are listed in \Tabref{sup:tab:hypers-random-and-knapsack-and-matching}.
We use Adam \citep{kingma2014adam} as the optimizer for all demonstrations. Gurobi parameters for all experiments are kept to default.

\begin{table}[h]
    \small
    \centering
    \caption{Hyperparameters for all demonstrations.}
			\begin{tabular}{@{}lcccl@{}}
			\toprule
										& \textbf{WSC \& Random}      & \multirow{2}{*}{\textbf{Knapsack}} & \textbf{Keypoint}  \\
										& \textbf{Constraints} &                                    & \textbf{Matching}  \\ \midrule
			Learning rate & $5\times 10^{-4}$    & $5\times 10^{-4}$                  & $1\times 10^{-4}$   \\
			Batch size    & 8                    & 8                                  & 8                  \\
			Train epochs  & 100                  & 100                                & 10                 \\
			$\tau$        & 0.5                  & 0.5                                & 0.5                \\
			Backbone lr   & --                   & --                                 & $2.5\times 10^{-6}$ \\ \bottomrule
			\end{tabular}
    \label{sup:tab:hypers-random-and-knapsack-and-matching}
\end{table}

\paragraph{Random Constraints.}

For selecting the set of constraints for data generation, we uniformly sample constraint origins~$\vo_k$ in the center subcube (halved edge length) of the underlying hypercube of possible solutions.
The constraint normals $\va_k$ and the cost vectors $\vc$ are randomly sampled normalized vectors and the bias terms are initially set to $b_k=0.2$.
The signs of the constraint normals $\va_k$ are flipped in case the origin is not feasible, ensuring that the problem has at least one feasible solution.
We generate 10 such datasets for $m=1,2,4,8$ constraints in $n=16$ dimensions.
The size of each dataset is 1\,600 train instances and 1\,000~test instances.

For learning, the constraints are initialised in the same way except for the feasibility check, which is skipped since \method can deal with infeasible regions itself.

\paragraph{\textsc{Knapsack} from Sentence Description.}

Our method and CVXPY use a small neural network to extract weights and prices from the 4\,096-dimensional embedding vectors.
We use a two-layer MLP with a hidden dimension of 512, ReLU nonlinearity on the hidden nodes, and a sigmoid nonlinearity on the output.
The output is scaled to the \gt price range $[10,45]$ for the cost $\vc$ and to the \gt weight range $[15,35]$ for the constraint $\va$.
The bias term is fixed to the \gt knapsack capacity $b=100$.
Item weights and prices as well as the knapsack capacity are finally multiplied by a factor of $0.01$ to produce a reasonable scale for the constraint parameters and cost vector.

The CVXPY baseline implements a greedy rounding procedure to ensure the feasibility of the predicted integer solution with respect to the learned constraints.
Starting from the item with the largest predicted (noninteger) value, the procedure adds items to the predicted (integer) solution until no more items can be added without surpassing the knapsack capacity.

The MLP baseline employs an MLP consisting of three layers with dimensionality $100$ and ReLU activation on the hidden nodes.
Without using an output nonlinearity, the output is rounded to the nearest integer point to obtain the predicted solution.

\paragraph{Deep Keypoint Matching.}

We initialize a set of constraints exactly as in the \textit{binary} case of the Random Constraints demonstration.
We use the architecture described by \citet{rolinek2020deep}, only replacing the dedicated solver module with \method.

We train models for varying numbers of keypoints $p = 4,5,6,7$ in the source and target image, resulting in varying dimensionalities $n=p^2$
and number of constraints $m=2p$.
Consistent with \citet{rolinek2020deep}, all models are trained for 10 epochs, each consisting of 400 iterations with randomly drawn samples from the training set.
We discard samples with fewer keypoints than what is specified for the model through the dimensionality of the constraint set.
If the sample has more keypoints, we chose a random subset of the correct size.

After each epoch, we evaluate the trained models on the validation set.
Each model's highest-scoring validation stage is then evaluated on the test set for the final results.

\subsection{Runtime analysis.}

The runtimes of our demonstrations are reported in \Tabref{sup:tab:runtimes}.
Random Constrains demonstrations have the same runtimes as Weighted Set Covering since they share the architecture.

Unsurprisingly, \method has higher runtimes as it relies on ILP solvers which are generally slower than LP solvers.
Also, the backward pass of \method has negligible runtime compared to the forward-pass runtime.
In Random Constraints, Weighted Set Covering and \textsc{Knapsack} demonstration, the increased runtime is necessary, as the baselines simply do not solve a hard enough problem to succeed in the tasks.

In the Keypoint Matching demonstration, \method slightly drops behind \bbgm and requires higher runtime.
Such drawback is outweighed by the benefit of employing a broad-expressive model that operates without embedded knowledge of the underlying combinatorial task.



\begin{table}[t]
	\small
	\centering
	\caption{Average runtime for training and evaluating a model on a single Tesla-V100 GPU.
	For Keypoint Matching, the runtime for the largest model ($p=7$) is shown.}
	\begin{tabular*}{\linewidth}{l@{\extracolsep{\fill}}ccc}
				\toprule
							  & Weighted      & \multirow{2}{*}{Knapsack} & Keypoint \\
							  & Set Covering 	&                           & Matching \\ \midrule
				\method & 1h 30m        & 3h 50m                    & 5h 30m   \\
				CVXPY   & 1h            & 2h 30m                    & --       \\
				MLP     & 10m           & 20m                       & --       \\
				\bbgm   & --            & --                        & 55m      \\ \bottomrule
	\end{tabular*}
	\label{sup:tab:runtimes}
\end{table}

\subsection{Additional Results}

\paragraph{Random Constraints \& Weighted Set Covering.}

We provide additional results regarding the increased amount of learned constraints in \Tabref{sup:tab:additional-results} and \ref{sup:tab:additional-results:setcovering})
and the choice of the loss function \Tabref{sup:tab:additional-results:loss}. 

\renewcommand{\thefootnote}{$\star$}
\begin{table}[b]
    \small
    \centering
    \caption{
		Random Constraints demonstration with multiple learnable constraints.
		Using a dataset with $m$ \gt constraints, we train a model with $k \times m$ learnable constraints.
		Reported is evaluation accuracy ($\vy=\vy^*$ in \%) for $m=1,2,4,8$.
		Statistics are over 20 restarts (2 for each of the 10 dataset seeds).}
		\begin{adjustbox}{max width=\linewidth}
    	\begin{tabular}{@{}rldddd@{\hskip1.1ex}}
    	\toprule
				& \multicolumn{1}{c}{$m$} & \multicolumn{1}{c}{1} & \multicolumn{1}{c}{2} & \multicolumn{1}{c}{4} & \multicolumn{1}{c}{8}
    		\\ \midrule
    		\multirow{3}{1ex}{\rotatebox{90}{\strut\bin}}
				& $1\times m$\footnotemark[1]  & \bf 97.8 + \bf 0.7 & 94.2 + 10.1 & 77.4 + 13.5 & 46.5 + 12.4 \\
    		& $2\times m$                  & 97.3 + 0.9 & 95.1 + 1.6  & 87.8 + 5.2  & 63.1 + 7.0 \\
    		& $4\times m$                  & 96.9 + 0.7 & \bf 95.1 + \bf 1.2  & \bf 88.7 + \bf 2.3  & \bf 77.7 + \bf 3.2 \\
				\midrule
    		\multirow{3}{1ex}{\rotatebox{90}{\strut\den}}
    		& $1\times m$\footnotemark[1]  & 87.3 + 2.5 & 70.2 + 11.6 & 29.6 + 10.4 & 2.3 + 1.2 \\
    		& $2\times m$                  & \bf 87.8 + \bf 1.7 & \bf 73.4 + \bf 2.4  & \bf 32.7 + \bf 7.6  & 2.4 + 0.8 \\
    		& $4\times m$                  & 85.0 + 2.6 & 64.6 + 3.9  & 28.3 + 2.7  & \bf 2.9 + \bf 1.3 \\
    	\bottomrule
    	\end{tabular}
		\end{adjustbox}
    \label{sup:tab:additional-results}
\end{table}
\footnotetext[1]{Used in the main demonstrations.} 

With a larger set of learnable constraints the model is able to construct a more complex feasible region. While in general this tends to increase performance and reduce variance by increasing robustness to bad initializations, it can also lead to overfitting similarly to a neural net with too many parameters. 

\begin{table}[t]
    \small
    \centering
    \caption{
		Weighted set covering demonstration with multiple learnable constraints.
		}
    	\begin{tabular*}{\linewidth}{@{}l@{\extracolsep{\fill}}dddd@{\hskip 1ex}}
    	\toprule
				$k$ & \multicolumn{1}{c}{4} & \multicolumn{1}{c}{6} & \multicolumn{1}{c}{8} & \multicolumn{1}{c}{10}
				\\ \midrule
    		$1$\footnotemark[1]  & 100 + 0.0 & 97.2 + 6.4 & 79.7 + 12.1 & 56.7 + 14.8 \\
    		$2$                  & 100 + 0.0 & 99.5 + 1.9 & \bf 99.3 + \bf 0.8  & 80.4 + 13.0 \\
    		$4$                  & 100 + 0.0 & \bf 99.9 + \bf 0.0 & 97.9 + 6.4  & \bf 85.2 + \bf 8.1  \\
    	\bottomrule
    	\end{tabular*}
    \label{sup:tab:additional-results:setcovering}
\end{table}

In the \den case, we also compare different loss functions which is possible because \method can be used as an arbitrary layer.
As shown in \Tabref{sup:tab:additional-results:loss}, this choice matters, with the MSE loss, the L1 loss and the Huber loss outperforming the L0 loss. This freedom of loss function choice can prove very helpful for training more complex architectures.

\begin{table}[h]
    \small
    \centering
		\captionsetup{belowskip=-\baselineskip}
    \caption{
		Random Constraints \den demonstration with various loss functions.
		For the Huber loss we set $\beta=0.3$.
		Statistics are over 20 restarts (2 for each of the 10 dataset seeds).}
    	\begin{tabular*}{\linewidth}{@{\extracolsep{\fill}}ldddd@{}}
    	\toprule
				Loss & \multicolumn{1}{c}{1} & \multicolumn{1}{c}{2} & \multicolumn{1}{c}{4} & \multicolumn{1}{c}{8}
				\\ \midrule
    		MSE\footnotemark[1]    & 87.3 + 2.5 & 70.2 + 11.6 & 29.6 + 10.4 & 2.3 + 1.2 \\
    		Huber                  & 88.3 + 4.0 & 75.4 + 9.3  & 25.0 + 11.8 & \bf 2.6 + \bf 2.7 \\
    		L0	                   & 85.9 + 3.4 & 65.8 + 3.5  & 15.3 + 4.3  & 1.1 + 0.3 \\
    		L1	                   & \bf 89.2 + \bf 1.6 & \bf 75.8 + \bf 10.8 & \bf 30.2 + \bf 16.5 & 2.1 + 1.2 \\
    	\bottomrule
    	\end{tabular*}
    \label{sup:tab:additional-results:loss}
\end{table}

\paragraph{\textsc{Knapsack} from Sentence Description.}

As for the Random Constraints demonstration, we report the performance of \method on the \textsc{Knapsack} task for a higher number of learnable constraints.
The results are listed in \Tabref{sup:tab:additional-results:knapsack}. 
Similar to the \bin Random Constraints ablation with $m=1$, increasing the number of learnable constraints does not result in strongly increased performance.

\begin{table}[b]
    \small
    \centering
    \caption{
		Knapsack demonstration with more learnable constraints.
		Reported is evaluation accuracy ($\vy=\vy^*$ in \%) for $m=1,2,4,8$ constraints.
		Statistics are over 10 restarts.}
    	\begin{tabular}{dddd}
    	\toprule
				\multicolumn{1}{c}{1\footnotemark[1]} & \multicolumn{1}{c}{2} & \multicolumn{1}{c}{4} & \multicolumn{1}{c}{8}
    		\\ \midrule
    		64.7 + 2.8 & 63.5 + 3.7 & \bf 65.7 + \bf 3.1 & 62.6 + 4.4 \\
    	\bottomrule
    	\end{tabular}
    \label{sup:tab:additional-results:knapsack}
\end{table}

Additionally, we provide a qualitative analysis of the results on the \textsc{Knapsack} task.
In \Figref{sup:fig:additional-results:knapsack:prices} we compare the total \gt price of the predicted instances to the total price of the \gt solutions on a single evaluation of the trained models. 

The plots show that \method is achieving much better results than CVXPY.
The total prices of the predictions are very close to the optimal prices and only a few predictions are infeasible,
while CVXPY tends to predict infeasible solutions and only a few predictions have objective values matching the optimum.

In \Figref{sup:fig:additional-results:knapsack:errors} we compare relative errors on the individual item weights and prices on the same evaluation of the trained models as before.
Since (I)LP costs are scale invariant, we normalize predicted price vector to match the size of the \gt price vector before the comparison.

\method shows relatively small normally distributed errors on both prices and weights, precisely as expected from the prediction of a standard network.
CVXPY reports much larger relative errors on both prices and weights (note the different plot scale).
The vertical lines correspond to the discrete steps of \gt item weights in the dataset.
Unsurprisingly, the baseline usually tends to either overestimate the price and underestimate the item weight, or vice versa, due to similar effects of these errors on the predicted solution.

\begin{figure}[t]
  \centering
  \begin{subfigure}{.52\linewidth}
		\centering
		\includegraphics[width=\linewidth]{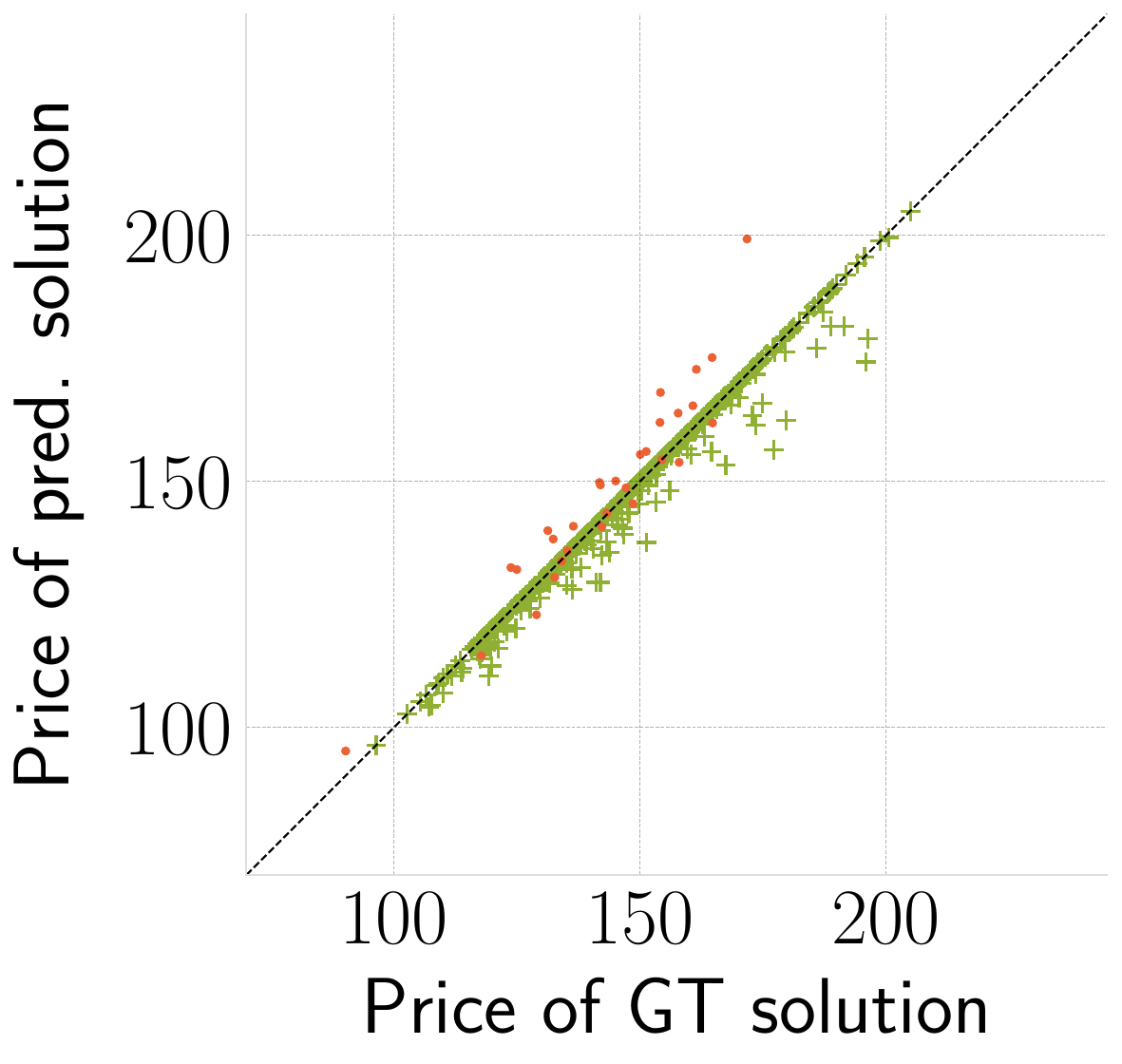}
		\caption{\method}
	\end{subfigure}
  \begin{subfigure}{.47\linewidth}
		\centering
		\includegraphics[width=\linewidth]{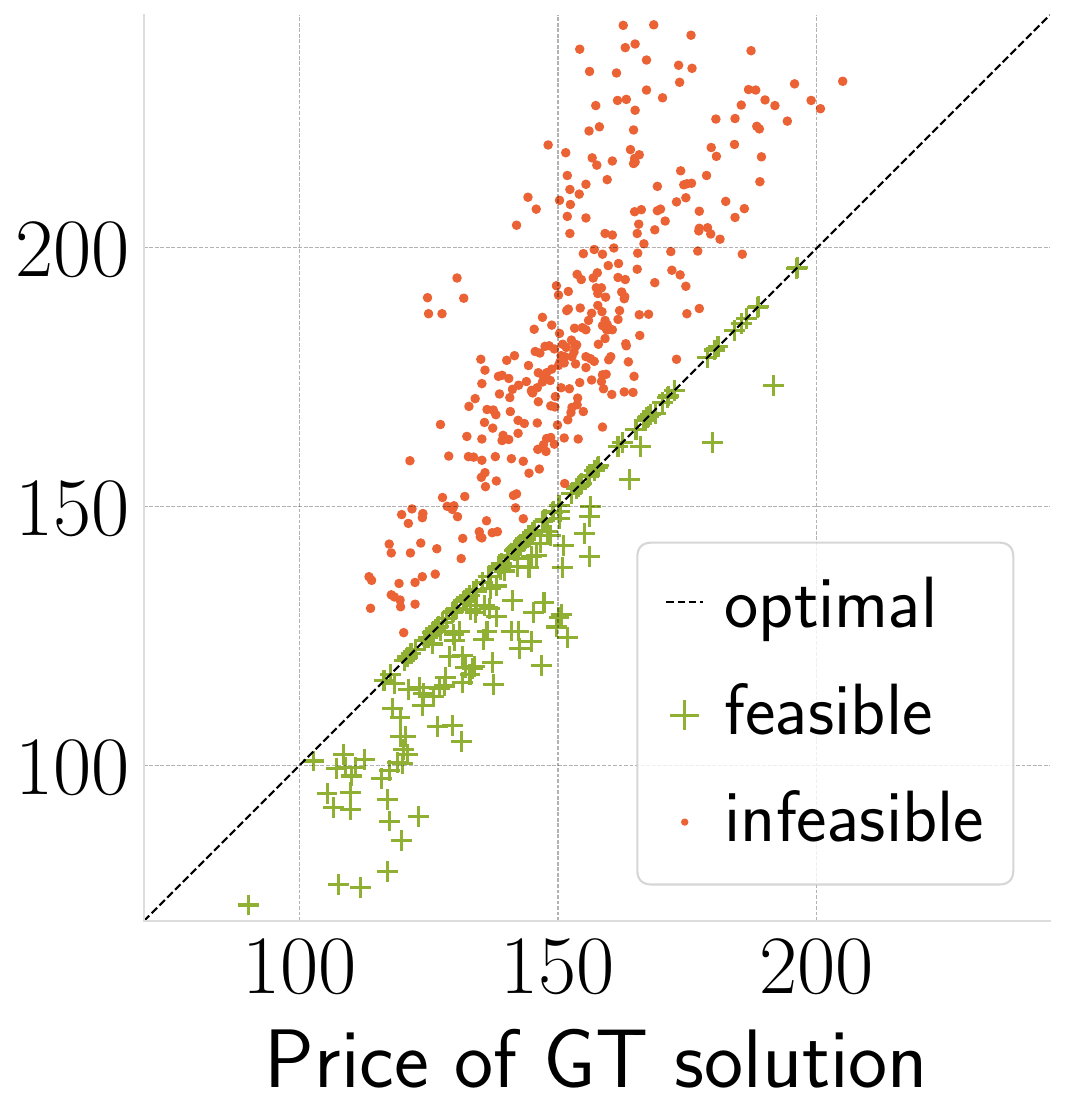}
		\caption{CVXPY}
  \end{subfigure}
  \caption{Prices analysis for the \textsc{Knapsack} demonstration.
	For each test set instance, we plot the total price of the predicted solution over the total price of the \gt solution.
	Predicted solutions which total weight exceeds the knapsack capacity are colored in red (cross).
	}
  \label{sup:fig:additional-results:knapsack:prices}
\end{figure}

\begin{figure}[h]
  \centering
  \begin{subfigure}{0.51\linewidth}
		\centering
		\includegraphics[width=\linewidth]{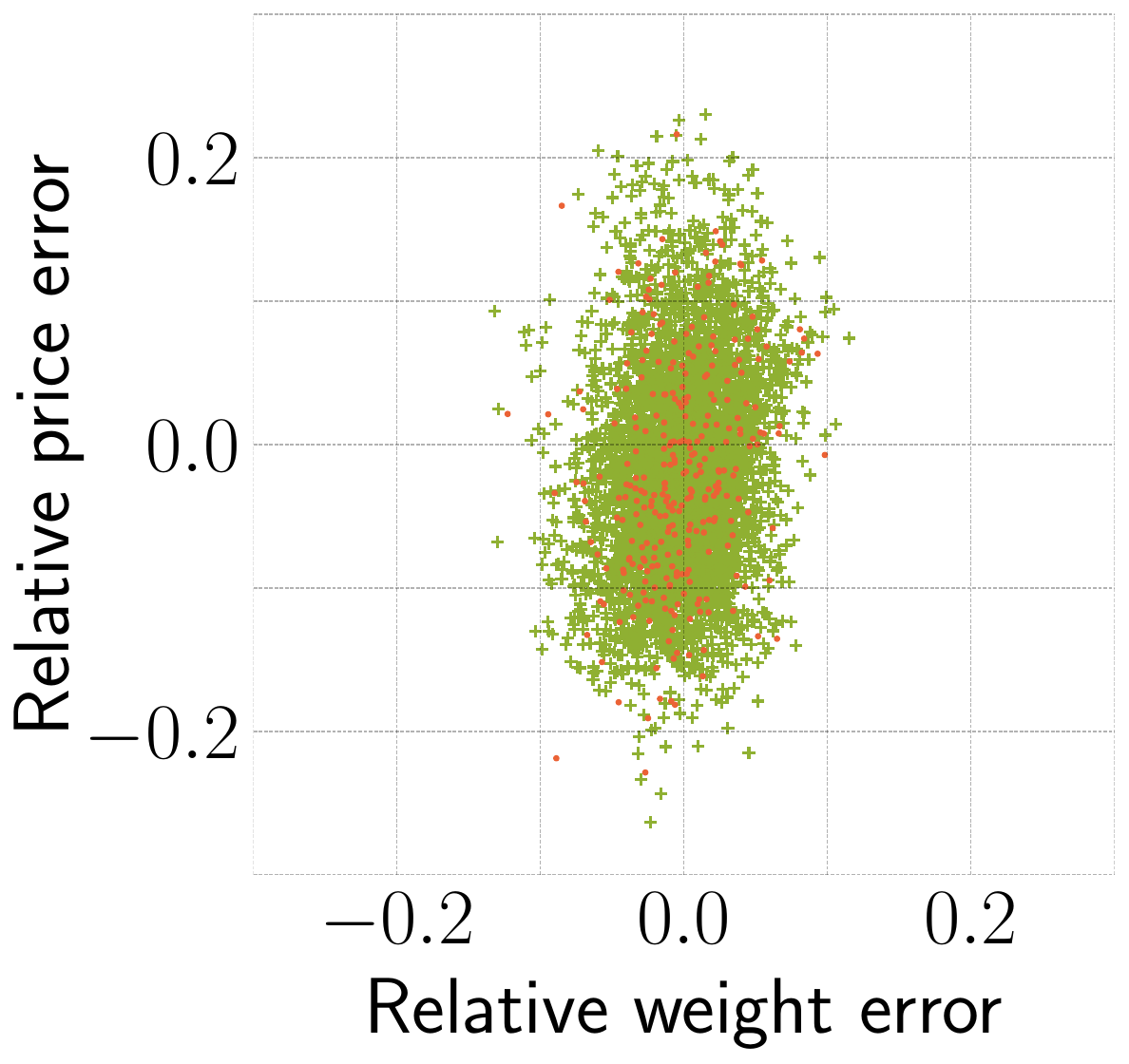}
		\caption{\method}
	\end{subfigure}
  \begin{subfigure}{.48\linewidth}
		\centering
		\includegraphics[width=\linewidth]{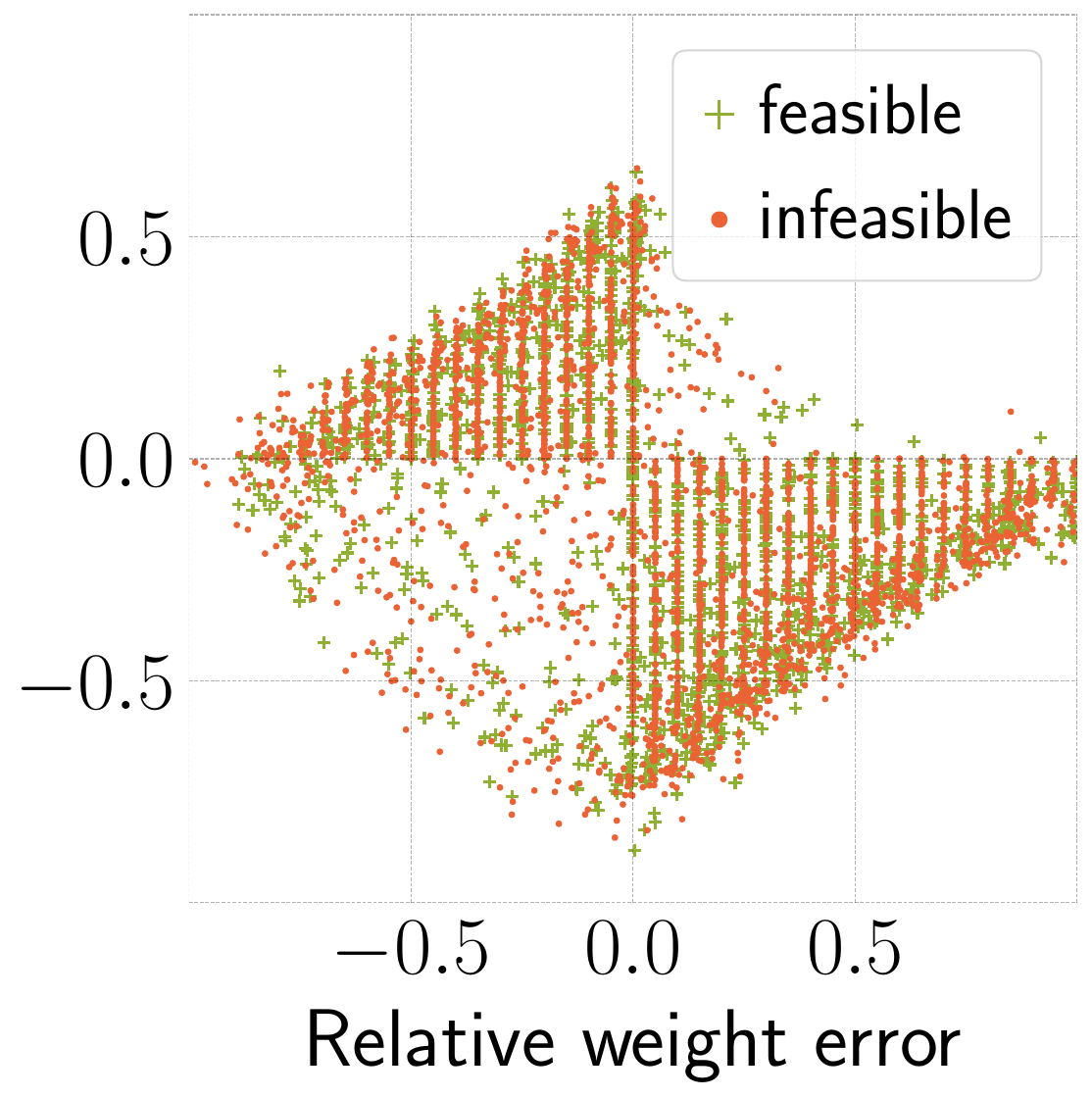}
		\caption{CVXPY}
  \end{subfigure}
  \caption{Qualitative analysis of the errors on weights and prices in the \textsc{Knapsack} demonstration.
	We plot the relative error between predicted and \gt item prices over the relative error between predicted and \gt item weights.
	Colors denote whether the predicted solution is feasible in terms of \gt weights.}
  \label{sup:fig:additional-results:knapsack:errors}
\end{figure}

\subsection{Ablations}

We ablate the choices in our architecture and model design on the Random Constraints (RC) and Weighted Set Covering (WSC) tasks.
In \Tabref{sup:tab:ablations:random-const} and \ref{sup:tab:ablations:set-covering} we report constraint parametrization, choice of basis, and minima softening ablations.

The ablations show that our parametrization with learnable origins is consistently among the best ones.
Without learnable origins, the performance is highly dependend on the origin of the coordinate system in which the directly learned parameters $(\mA,\vb)$ are defined.

The choice of basis in the gradient decomposition shows a large impact on performance. 
Our basis $\Delta$ \eqref{E:basis} is outperforming the canonical one in the \bin \rc and \wsc demonstration, while showing performance similar to the canonical basis in the \den \rc case.
The canonical basis produces directions for the computation of $\vy_k'$ that in many cases point in very different directions than the incoming descent direction.
As a result, the gradient computation leads to updates that are very detached from the original incoming gradient. 

Finally, the softened minimum leads to increased performance in all demonstrations. 
This effect is apparent particularly in the case of a binary solution space, as the constraints can have a relevant impact on the predicted solution $\vy$ over large distances.
Therefore, only updating the constraint which is closest to the predicted solution $\vy$, as it is the case for a hard minimum, gives no gradient to constraints that may potentially have had a huge influence on~$\vy$.

\section{Method}

\subsection{Fixed Constraints}
In the paper we omitted the discussion of the effect of fixed (\ie not learnable) constraints on the presented update rules.
In our experiments, we assumed fixed feasible region to be a hypercube $[l, u]^n$ with $l,u\in\sZ, l<z$.
Fixed constraints can cause problems, as our decomposition only guarantees that the targets $\vy'_k$ are integer points, but their feasibility \wrt to the fixed constraints is not guaranteed.
Therefore, our assumption of $\vy'_k$ being an \emph{attainable} target can be violated.

\begin{table*}[t]
		\vspace{2\baselineskip}
    \centering
		\captionsetup{width=.85\linewidth}
    \caption{Ablations of \method on Random Constraints demonstration.
		Reported is evaluation accuracy ($\vy=\vy^*$ in \%) for $m=1,2,4,8$ \gt constraints.
		Statistics are over 20 restarts (2 for each of the 10 dataset seeds).}
    	\begin{tabular*}{.85\linewidth}{@{\extracolsep{\fill}}ccle{3,2}dde{3,4}}
    	\toprule
    		&& Method & \multicolumn{1}{c}{1} & \multicolumn{1}{c}{2} & \multicolumn{1}{c}{4} & \multicolumn{1}{c}{8}
    		\\ \midrule
				\multirow{8}{1ex}{\rotatebox{90}{\strut\bin}}
				&\multirow{3}{1ex}{\rotatebox{90}{\strut param.}}
    		&  learnable origins\footnotemark[1]          & \bf 97.8 + \bf 0.7 & 94.2 + 10.1 & \bf 77.4 + \bf 13.5 & \bf 46.5 + \bf 12.4 \\
    		&& direct (origin at corner)                  & 97.4 + 1.0 & \bf 94.9 + \bf 7.0  & 59.0 + 26.8 & 26.9 + 10.3 \\
    		&& direct (origin at center)                 & 98.0 + 0.5 & 97.1 + 0.6  & 70.5 + 19.1 & 44.6 + 5.9  \\
				\cmidrule(l){2-7}
				&\multirow{2}{1ex}{\rotatebox{90}{\strut basis}}
    		&  $\Delta$ basis\footnotemark[1]             & \bf 97.8 + \bf 0.7 & \bf 94.2 + \bf 10.1 & \bf 77.4 + \bf 13.5 & \bf 46.5 + \bf 12.4 \\
    		&& canonical                                  & 96.3 + 1.9 & 70.8 + 4.1  & 14.4 + 3.2  & 2.7  + 0.9  \\
				\cmidrule(l){2-7}
				&\multirow{3}{1ex}{\rotatebox{90}{\strut min}}
    		&  hard                                       & & 83.1 + 13.2 & 55.4 + 18.9 & 37.7 + 8.7  \\ 
    		&& soft ($\tau=0.5$)\footnotemark[1]          & 97.8 + 0.7 & 94.2 + 10.1 & \bf 77.4 + \bf 13.5 & \bf 46.5 + \bf 12.4 \\
    		&& soft ($\tau=1.0$)                          & & \bf 95.7 + \bf 2.2  & 70.2 + 14.1 & 36.0 + 9.7  \\ 

				\midrule
				\multirow{8}{1ex}{\rotatebox{90}{\strut\den}}
        &\multirow{3}{1ex}{\rotatebox{90}{\strut param.}}
    		&  learnable origins\footnotemark[1]          & \bf 87.3 + \bf 2.5 & 70.2 + 11.6 & 29.6 + 10.4 & 2.3 + 1.2 \\
    		&& direct (origin at corner)                  & 86.7 + 3.0 & \bf 74.6 + \bf 3.6  & \bf 32.6 + \bf 13.7 & \bf 2.8 + \bf 0.5 \\
    		&& direct (origin at center)                 & 83.0 + 6.1 & 43.8 + 13.2 & 11.6 + 3.1  & 1.1 + 0.5 \\
				\cmidrule(l){2-7}
        &\multirow{2}{1ex}{\rotatebox{90}{\strut basis}}
    		&  $\Delta$ basis\footnotemark[1]             & 87.3 + 2.5 & 70.2 + 11.6 & \bf 29.6 + 10.4 & 2.3 + 1.2 \\
    		&& canonical                                  & \bf 88.6 + \bf 1.4 & \bf 71.6 + \bf 1.6  & 26.8 + 4.1  & \bf 4.0 + \bf 0.7 \\
				\cmidrule(l){2-7}
        &\multirow{3}{1ex}{\rotatebox{90}{\strut min}}
    		&  hard                                       &  & 70.8 + 15.1 & 21.4 + 10.7 & 2.2 + 2.1 \\ 
    		&& soft ($\tau=0.5$)\footnotemark[1]          & 89.1 + 2.8 & 70.2 + 11.6 & 29.6 + 10.4 & \bf 2.3 + \bf 1.2 \\ 
    		&& soft ($\tau=1.0$)                          &  & \bf 73.0 + \bf 12.1 & \bf 31.9 + \bf 11.7 & 2.2 + 1.5 \\ 
    	\bottomrule
    	\end{tabular*}
    \label{sup:tab:ablations:random-const}
\end{table*}

\begin{table*}[h!]
		\vspace{3\baselineskip}
    \centering
		\captionsetup{width=.85\linewidth}
    \caption{Ablations of \method on Weighted Set Covering.
		Reported is evaluation accuracy ($\vy=\vy^*$ in \%) for $m=4,6,8,10$ \gt constraints.
		Statistics are over 20 restarts (2 for each of the 10 dataset seeds).}
    	\begin{tabular*}{.85\linewidth}{@{\extracolsep{\fill}}clddde{3,4}}
    	\toprule
    	& Method & \multicolumn{1}{c}{4} & \multicolumn{1}{c}{6} & \multicolumn{1}{c}{8} & \multicolumn{1}{c}{10}
    		\\ \midrule
        \multirow{3}{1ex}{\rotatebox{90}{\strut param.}}
    		&  learnable origins\footnotemark[1]       & \bf 100 + \bf 0.0 & \bf 97.2 + \bf 6.4 & \bf 79.7 + \bf 12.1 & \bf 56.7  + \bf 14.8 \\
    		& direct (origin at corner)                & 99.4 + 2.9 & 94.1 + 16.4 & 78.5 + 15.7 & 47.7 + 17.9 \\
    		& direct (fixed origin at 0)               & 99.9 + 0.6 & 87.6 + 6.4 & 65.3 + 11.9 & 46.7 + 11.5 \\
				\midrule                           
        \multirow{2}{1ex}{\rotatebox{90}{\strut basis}}
    		&  $\Delta$ basis\footnotemark[1]          & \bf 100 + \bf 0.0 & \bf 97.2 + \bf 6.4 & \bf 79.7 + \bf 12.1 & \bf 56.7  + \bf 14.8 \\
    		& canonical                                & 8.4 + 13.3 & 2.0 + 2.6 & 0.2 + 0.3 & 0.0 + 0.1 \\
        \midrule
				\multirow{5}{1ex}{\rotatebox{90}{\strut min}}     
    		&  hard                                    & 88.2 + 13.4 & 64.3 + 14.6 & 45.1 + 14.1 & 32.3 + 17.4 \\
    		& soft ($\tau=0.5$)\footnotemark[1]        & \bf 100 + \bf 0.0 & \bf 97.2 + \bf 6.4 & \bf 79.7 + \bf 12.1 & \bf 56.7  + \bf 14.8 \\
    		& soft ($\tau=1.0$)                        & 99.9 + 0.4 & 95.6 + 9.6 & 70.3 + 15.5 & 51.2 + 16.4 \\
    		& soft ($\tau=2.0$)                        & 98.8 + 3.1 & 90.6 + 14.3 & 66.4 + 12.5 & 51.2 + 9.5 \\
    		& soft ($\tau=5.0$)                        & 97.5 + 11.1 & 90.2 + 9.1 & 64.2 + 11.8 & 49.7 + 10.4 \\
    	\bottomrule
    	\end{tabular*}
    \label{sup:tab:ablations:set-covering}
		\vspace{1\baselineskip}
\end{table*}

In our cases, we ensure the feasibility of the targets by first projecting $\vy-\d\vy$ into the hypercube as
\begin{equation}\label{eq:proj-hypercube}
	(\proj{\vy'})_i = \max(\min(y_i-\d y_i,u),l)
\end{equation}
and then decomposing the projected gradient 
\begin{equation}
	\proj{\d\vy}=\vy-\proj{\vy'}
\end{equation}
into integer steps as before.

\subsection{Decomposition vs.~Ground Truth}
In the case of direct access to the ground-truth solution $\vy^*$ (\ie when \method is the last component in the architecture), we can always use $\vy^*$ as a replacement for our targets $\vy'_k$, and there is no need to decompose the incoming gradient.
In our demonstrations, we do in principle have access to $\vy^*$, but, we still employ our decomposition to demonstrate the applicability of \method also in the general case.

\subsection{Proofs}
To recover the situation from the method section, set $\vx$ as one of the inputs $\mA$, $\vb$, or $\vc$.

\begin{prop} \label{P:y-diff}
Let $\vy\colon\R^\ell\to\R^n$ be differentiable at $\vx\in\R^\ell$ and let $L\colon\R^n\to\R$ be differentiable at $\vy=\vy(\vx)\in\R^n$.
Denote $\d\vy=\partial L/\partial \vy$ at $\vy$.
Then the distance between $\vy(\vx)$ and $\vy-\d\vy$ is minimized along the direction $\partial L/\partial\vx$, where $\partial L/\partial\vx$ stands for the derivative of $L(\vy(\vx))$ at $\vx$.
\end{prop}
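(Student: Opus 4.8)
The plan is to read the phrase ``the distance is minimized along the direction $\partial L/\partial\vx$'' as a steepest-descent statement and then reduce it to a one-line chain-rule computation. First I would introduce the squared-distance-to-target objective
\begin{equation}
	f(\vx') = \tfrac12\bigl\|\vy(\vx') - (\vy-\d\vy)\bigr\|^2,
\end{equation}
defined for $\vx'$ near $\vx$, so that moving $\vx$ so as to decrease $f$ is exactly moving the updated solution $\vy(\vx')$ toward the target $\vy-\d\vy$. The claim then becomes: the direction of steepest descent of $f$ at $\vx'=\vx$ is $-\partial L/\partial\vx$; equivalently, among all perturbations $\vx'=\vx-t\vv$ the first-order decrease of $f$ is largest for $\vv$ parallel to $\partial L/\partial\vx$.

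The key step is to differentiate $f$ at $\vx$ by the chain rule. Writing $J=\partial\vy/\partial\vx$ for the Jacobian (which exists by the differentiability hypothesis on $\vy$), I get
\begin{equation}
	\nabla f(\vx) = J^\top\bigl(\vy(\vx) - (\vy-\d\vy)\bigr).
\end{equation}
Because $\vy(\vx)=\vy$ by definition, the term in parentheses collapses to exactly $\d\vy$, so $\nabla f(\vx)=J^\top\d\vy$. Now I invoke the chain rule in the other factor: since $\d\vy=\partial L/\partial\vy$ and $J=\partial\vy/\partial\vx$, the product $J^\top\d\vy$ is precisely the derivative of the composition $L\circ\vy$ at $\vx$, that is $\partial L/\partial\vx$. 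Hence $\nabla f(\vx)=\partial L/\partial\vx$, and the steepest-descent direction of the distance is $-\nabla f(\vx)=-\partial L/\partial\vx$, which is the assertion. To make the ``direction'' statement fully explicit I would compute the directional derivative $\tfrac{\d}{\d t}f(\vx-t\vv)\big|_{t=0}=-\langle\nabla f(\vx),\vv\rangle$ and minimize over unit $\vv$ by Cauchy--Schwarz, which is uniquely attained at $\vv\propto\nabla f(\vx)=\partial L/\partial\vx$.

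The only genuinely delicate points are bookkeeping rather than mathematical depth. First, the statement concerns the Euclidean distance itself, not its square; I would note that wherever the distance is positive the two objectives share the same descent directions (the square is a strictly increasing function of the distance), so working with $f$ loses nothing. Second, this forces me to handle the degenerate case $\d\vy=\vzero$ separately: there the target coincides with $\vy(\vx)$, the distance is already zero and minimal, and $\partial L/\partial\vx=J^\top\vzero=\vzero$, so the statement holds trivially. I expect the main obstacle to be purely expository, namely pinning down the intended meaning of ``minimized along the direction'' so that the identity $\nabla f(\vx)=\partial L/\partial\vx$ cleanly delivers it; the analytic content is just the two applications of the chain rule above.
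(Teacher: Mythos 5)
Your proposal is correct and follows essentially the same route as the paper: both reduce the claim to a chain-rule computation showing that the gradient of the distance-to-target at $\vx$ is a positive scalar multiple of $\partial L/\partial\vx$, and both dispose of the degenerate case $\d\vy=\vzero$ separately. The only cosmetic difference is that you differentiate the squared distance and invoke monotonicity of the square, whereas the paper differentiates the distance itself and carries the resulting factor $1/\|\d\vy\|$ explicitly.
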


\begin{proof}
For $\xi\in\R^\ell$, let $\varphi(\xi)$ denote the distance between $\vy(\vx-\xi)$ and the target $\vy(\vx)-\d\vy$, \ie
\begin{equation*}
	\varphi(\xi)
		= \bigl\| \vy(\vx - \xi) - \vy(\vx) + \d\vy \bigr\|.
\end{equation*}
There is nothing to prove when $\d\vy=0$ as $\vy(x)=\vy-\d\vy$ and there is no room for any improvement.
Otherwise, $\varphi$ is positive and differentiable in a neighborhood of zero.
The Fr\'echet derivative of $\varphi$ reads as
\begin{equation*}
	\varphi'(\xi)
		= \frac{ - \bigl[\vy(\vx - \xi) - \vy(\vx) + \d\vy\bigr]
					\cdot\frac{\partial\vy}{\partial\vx}(\vx - \xi)}
				{\bigl\| \vy(\vx - \xi) - \vy(\vx) + \d\vy \bigr\|},
\end{equation*}
hence
\begin{equation}
	\varphi'(0)
		= - \frac{1}{\|\d\vy\|}
			\frac{\partial L}{\partial\vy}
				\cdot
			\frac{\partial\vy}{\partial\vx}
		= - \frac{1}{\|\d\vy\|}
			\frac{\partial L}{\partial\vx},
\end{equation}
where the last equality follows by the chain rule.
Therefore, the direction of the steepest descent coincides with the direction of the derivative $\partial L/\partial\vx$,
as $\|\d\vy\|$ is a scalar.
\end{proof}

\begin{proof}[Proof of Proposition~\ref{P:basis}]
We prove that
\begin{equation} \label{E:claim}
	\sum_{j=\ell}^n u_j \ve_{k_j}
		= \sum_{j=\ell}^n \lambda_j \Delta_j
			- |u_\ell| \sum_{j=1}^{\ell-1} \sign(u_j) \ve_{k_j}
\end{equation}
for every $\ell=1,\ldots,n$,
where we abbreviate $u_j=\d\vy_{k_j}$.
The claimed equality \eqref{E:decomposition}
then follows from~\eqref{E:claim} in the special case $\ell=1$.

We proceed by induction. In the first step we show~\eqref{E:claim} for $\ell=n$.
Definition of $\Delta_n$ \eqref{E:basis} yields
\begin{align*}
	& \lambda_n \Delta_n
			- |u_n| \sum_{j=1}^{n-1} \sign(u_j) \ve_{k_j}
			\\
	&\quad
		=	|u_n| \sum_{j=1}^{n} \sign(u_j) \ve_{k_j}
			- |u_n| \sum_{j=1}^{n-1} \sign(u_j) \ve_{k_j}
			\\
	&\quad
		= u_n \ve_{k_n}.
\end{align*}
Now, assume that \eqref{E:claim} holds for $\ell+1\ge 2$. We show
that \eqref{E:claim} holds for $\ell$ as well. Indeed,
\begin{align*}
	& \sum_{j=\ell}^n \lambda_j \Delta_j
			- |u_\ell| \sum_{j=1}^{\ell-1} \sign(u_j) \ve_{k_j}
			\\
	&\quad
		= \sum_{j=\ell+1}^n \lambda_j \Delta_j
			- |u_{\ell+1}| \sum_{j=1}^{\ell} \sign(u_j) \ve_{k_j}
			+ \lambda_\ell \Delta_\ell
			\\
	&\qquad
			+ |u_{\ell+1}| \sum_{j=1}^{\ell} \sign(u_j) \ve_{k_j}
			- |u_\ell| \sum_{j=1}^{\ell-1} \sign(u_j) \ve_{k_j}
			\\
	&\quad
		= \sum_{j={\ell+1}}^n u_j \ve_{k_j}
			+\bigl(|u_\ell| - |u_{\ell+1}|\bigr) \sum_{j=1}^{\ell} \sign(u_j) \ve_{k_j}
			\\
	&\qquad
			+ |u_{\ell+1}| \sum_{j=1}^{\ell} \sign(u_j) \ve_{k_j}
			- |u_\ell| \sum_{j=1}^{\ell-1} \sign(u_j) \ve_{k_j}
			\\
	&\quad
		= \sum_{j={\ell+1}}^n u_j \ve_{k_j}
			+ \sign(u_\ell)|u_\ell| \ve_{k_\ell}
		= \sum_{j=\ell}^n u_j \ve_{k_j},
\end{align*}
where we used the definitions of $\Delta_\ell$
and $\lambda_\ell$.
\end{proof}

\end{document}